\documentclass{article} 
\usepackage{iclr2027_conference,times}

\newcommand{\bmu}{\boldsymbol{\mu}}
\usepackage{amsmath,amsfonts,bm}

\def\eqref#1{equation~\ref{#1}}

\def\1{\bm{1}}

\DeclareMathAlphabet{\mathsfit}{\encodingdefault}{\sfdefault}{m}{sl}
\SetMathAlphabet{\mathsfit}{bold}{\encodingdefault}{\sfdefault}{bx}{n}

\newcommand{\h}{\mathbf{h}}

\newcommand{\R}{\mathbb{R}}

\DeclareMathOperator*{\argmax}{arg\,max}
\DeclareMathOperator*{\argmin}{arg\,min}

\renewcommand{\R}{\mathbf{R}} 
\usepackage{url}

\usepackage{amsmath,amssymb,amsfonts}
\usepackage{graphicx}
\usepackage{booktabs}
\usepackage{colortbl} 
\usepackage{array}
\usepackage{multirow}
\usepackage{bm}
\usepackage{enumitem}
\usepackage{algorithm}
\usepackage{algpseudocode}
\usepackage{placeins}
\usepackage{amsthm}
\newtheorem{proposition}{Proposition}[section]
\newtheorem{remark}[proposition]{Remark}
\usepackage{float}
\usepackage{hyperref}
\graphicspath{{figures/}}


\iclrfinalcopy
\AtBeginDocument{\lhead{}\renewcommand{\headrulewidth}{0pt}}

\title{\Large Fair Fact-Checking: Closing the Cross-Lingual Gap\\[0.15em]
in LLM Factual Judgement with \textsc{RoSh}}

\author{
  \textbf{Muhammad Ahmad}$^{1,2}$ \quad
  \textbf{Fatemeh Seyedin}$^{3}$ \quad
  \textbf{Adrian Weller}$^{4}$ \\[0.25em]
  \textbf{Dongwon Lee}$^{5}$ \quad
  \textbf{Mahmoudreza Babaei}$^{2,3,6}$ \\[0.8em]
  \normalsize
  $^{1}$Department of Computer Science, Shifa Tameer-e-Millat University, Islamabad, Pakistan \\
  $^{2}$BRAINS, Brandenburg Research Center for Applied Intelligent Systems, Potsdam, Germany \\
  $^{3}$Max Planck Institute for Security and Privacy, Bochum, Germany \\
  $^{4}$University of Cambridge, United Kingdom \\
  $^{5}$The Pennsylvania State University, University Park, PA, USA \\
  $^{6}$GISMA University of Applied Sciences, Potsdam, Germany \\[0.5em]
}

\begin{document}
\lhead{\small Preprint. Under review.}
\maketitle

\begin{abstract}

Misinformation on social media remains a critical problem, and more and more people settle it by asking a language model instead of a fact checker, about anything from a claim circulating on a platform to a plain question of fact. Whether models judge such claims reliably
is still debated; whether they judge them equally well in every language people ask in has gone almost unasked. We test eight models from five families, 3B to 70B, on 1,500 claims that mix fact-checked news with encyclopedic facts and exist in identical form in eight languages. On every model English is judged better than the other languages, and the gap is widest on the smallest models, where Llama-3B on Arabic is no better than guessing. Existing remedies either retrain the model on more multilingual data or fit an unconstrained map between language representations, and neither asks whether the model already holds the answer and simply fails to say it. It largely does: a linear probe recovers the truth from the very activations the model fails to express. We propose RoSh, a per-language shift and rotation of the residual stream, computed in closed form at three layers, with no training and no weight modified. It improves every model and closes 75\% of the gap on average, and it helps most where the model was worst: the two smallest models end close to English, Arabic on Llama-3B goes from chance to nearly the English level, and a fifth fewer of the claims answered correctly in English are lost in translation. What remains is no longer a read-out failure: afterwards the head recovers at least as much of what is encoded outside English as it does in English.
An unconstrained map fitted on the same pairs falls below the untouched baseline, so the orthogonality constraint is doing the work, and every model clears a scrambled-correspondence control and ten further controls. On the two benchmarks of the closest inference-time method, latent-space intervention, run with its own data and metric code, RoSh's gains are five to thirteen times larger.
\end{abstract}

\section{Introduction}

Social media platforms have been criticised for years for letting false stories spread unchecked, and the damage does not stay online. It has been traced into elections \citep{allcott2017social} and, during the pandemic, into public health. Because false stories also diffuse faster and reach wider audiences than true ones, fact-checks typically arrive long after public attention has shifted \citep{vosoughi2018spread}. In response, platforms partnered with independent fact-checking organisations such as Snopes, PolitiFact, and Full Fact to suppress flagged content in user feeds. However, this approach assumes that users actively seek out verification, and increasingly they do not. Across 48 markets, 10\% of adults now use an AI chatbot for news every week, up from 7\% a year earlier, and 16\% among under-35s \citep{reuters2026}.

A chatbot asked about a claim becomes the fact checker, and unlike a fact-checking desk it answers in whatever language it is asked: one model serves all 48 of those markets, in many languages. That makes a new question pressing: is the model's verdict on a claim the same whichever language the claim arrives in?
To find out, we evaluated eight language models on a straightforward task: given a factual claim with a known truth value, we queried the same model in English and in another target language to test whether its verdict remained consistent. For Llama-3B, consistency broke down sharply. The model achieved an AUROC of $0.857$ in English but dropped to $0.523$ in Arabic, marginally better than random guessing. Similarly, Mistral-7B scored $0.834$ in English compared to $0.646$ in Portuguese.
Across an evaluation of eight models spanning five architecture families ($3\text{B}$ to $70\text{B}$ parameters) and $1{,}500$ claims held identical across eight languages, English consistently outperforms all other languages by an average of $+0.046$ AUROC, a gap that widens to $+0.116$ and $+0.130$ on the two smallest models. Because every claim is evaluated symmetrically across all languages, this disparity cannot be attributed to differences in prompt difficulty. The quality of the verdict depends on the language you happen to speak.

Two things make that worse than a benchmark gap. The users it fails have the fewest alternatives, since where the major services are restricted or unaffordable people fall back on regional systems trained on far less data, and nothing in the answer reveals the problem.
The judge being replaced was no more reliable: given stories fact-checkers had already settled, ordinary readers produced verdicts tracking their politics rather than the evidence \citep{babaei2021analyzing}. A model may have a perception of its own, and ours depends on language. It is also asked about everything, so our claims mix fact-checked news with encyclopedic facts, and the gap is as wide on one half as on the other.

Prior literature extensively documents this disparity across cross-lingual verification and dialectal fairness \citep{lucas2026bluff,lucas2026diaharm,qi2023cross,han2026mubench,liu2025entity}. However, existing mitigations either require resource-intensive retraining that is impractical for downstream deployment, or rely on unconstrained inference-time mappings \citep{ghorbanpour2026lsi,wang2025incline} that are free to rescale and project, and so to discard information as well as re-orient it, which makes a change in score hard to read as evidence of what the underlying model knew. Crucially, both approaches overlook a fundamental question: before concluding that a model lacks factual knowledge in a target language, one must first determine whether it possesses the internal knowledge yet fails to express it.

\noindent\textbf{Research question.} \emph{When a model judges a claim worse outside English, how much of the gap is knowledge it lacks, and how much is knowledge it holds but fails to read out? And can the second part be recovered at inference time, without changing a single weight?}

Both parts are real, and of similar size. Linear probing on intermediate activations recovers the truth in every language we test, including those where the model's own verdict falls to chance, and at those layers the probe scores far above the model's own output head. Comparing probe against head within each language splits the gap in two: $0.026$ is a shortfall in what the model encodes, and $0.021$ is the head reading non-English less efficiently than English (Section~\ref{sec:decomposition}). Only the second part can be repaired without changing the pre-trained parameters. We propose \textsc{RoSh}, a per-language shift and rotation applied to the residual stream at selected intermediate layers, computed as the closed-form orthogonal Procrustes solution \citep{schonemann1966} on unlabelled parallel claims and applied at inference, with no gradient taken and no weight modified. It closes $75\%$ of the gap on average and removes it entirely on Qwen-32B and Llama-70B, lifting accuracy from $0.727$ to $0.757$ and cutting the share of English-correct claims lost in translation by a fifth, from $15.7\%$ to $12.3\%$. The largest gains land where the gap was worst: Arabic on Llama-3B recovers from $0.523$ to $0.841$, Portuguese on Mistral-7B from $0.646$ to $0.896$. What remains is no longer a read-out failure: afterwards the head recovers at least as much of what is encoded outside English as it does in English.


The gains carry beyond our dataset: on the two benchmarks of the closest inference-time method, latent-space intervention \citep{ghorbanpour2026lsi}, run with its own data and evaluation scripts, \textsc{RoSh}'s gains are five to thirteen times larger, and neither benchmark contains news claims (Table~\ref{tab:external}).

\paragraph{Contributions.}
\begin{itemize}
\itemsep3pt

\item \textbf{The verdict depends on the language.} English leads by $+0.046$ AUROC on average and by $0.334$ on Arabic for Llama-3B, over $1{,}500$ claims that appear in every language.

\item \textbf{The gap has two parts, and only one is read-out.} A probe splits it into $0.026$ of representational shortfall and $0.021$ of read-out loss.

\item \textbf{\textsc{RoSh} removes the read-out part entirely.} A closed-form, label-free shift and rotation at three layers closes $75\%$ of the gap, and eleven controls, including an unconstrained map that falls below the untouched baseline, rule out generic perturbation.

\end{itemize}

\section{Related Work}
\label{sec:related}

Who receives a good judgement. Unequal judgements predate language models: readers of fact-checked news trusted claims that matched their politics and distrusted the rest \citep{babaei2021analyzing}; we ask the same question with a model as the judge and the input language in place of politics. Translating claims into English does not settle it, since translation-based pipelines carry disparities of their own \citep{singhal2024comparative}. The disparity itself is well measured: BMLAMA and RankC \citep{qi2023cross} and MuBench, across 61 languages \citep{han2026mubench}, test whether a model answers the same fact alike across languages, \citet{liu2025entity} tie consistency to entity alignment and improve it by English subject substitution, and \citet{piratla2025rethinking} trace part of the gap to response variance and reduce it by inference-time ensembling.

Where knowledge sits. Llama-2's intermediate layers favour English equivalents of the intended output \citep{wendler2024llamas}, cross-lingual factual failures arise both in recall and in converting the answer to the target language \citep{lu2025paths}, and factual associations sit in middle-layer MLP modules that can be edited directly \citep{meng2022locating}. Truth in particular can be read from hidden states, without supervision by enforcing consistency across paired answers \citep{burns2022discovering} or with a small supervised classifier \citep{azaria2023internal}; true and false statements separate geometrically, though probe directions of similar accuracy can differ when used to intervene \citep{marks2023geometry}, and a general truth direction can be told apart from a polarity-sensitive one that reverses under negation \citep{burger2024truth}. Our probes fit this picture: the truth is recoverable at mid-depth in every language, and what varies is whether it reaches the output head.

Aligning representations to close the gap. The closest work intervenes on representations at inference. INCLINE fits layer-wise alignment matrices by least squares on parallel sentences \citep{wang2025incline}; latent-space intervention trains layer-wise autoencoders that align languages in a shared latent space, improving cross-lingual consistency while largely preserving accuracy \citep{ghorbanpour2026lsi}; steering vectors shift target-language activations toward English at a single layer \citep{mahmoud2026improving}; and \citet{agarwal2025aligning} instead fine-tune on batches of semantically equivalent multilingual examples.

What is new. Every intervention above fits an unconstrained map or vector, free to rescale and project, and so to discard information as well as re-orient it; a change in score then mixes the two effects. \textsc{RoSh} is an affine isometry, distance-preserving and invertible, so it provably loses nothing (Proposition~\ref{prop:info}). The estimator itself is old: \citet{schonemann1966} solved the orthogonal Procrustes problem in closed form, \citet{smith2017offline} showed that the orthogonal constraint improves bilingual word-embedding alignment, and \citet{conneau2018word} made it unsupervised. What is new is that the same relationship holds inside a running model, between languages, in the residual stream, at inference time.
\section{Method: Rotation--Shift Alignment (RoSh)}
\label{sec:method}

A transformer's verdict on a factual claim is read from a single direction in its residual stream. We observe that the truth is linearly readable in every language we test, while the read-out head recovers much less of it outside English.
Table~\ref{tab:probe} shows it for two models. On Llama-3B in Arabic the model's own verdict is at chance, $0.523$ AUROC, yet a linear probe on the same activations reaches $0.871$; in every language the probe stays within $0.06$ of its English value, while the read-out falls up to $0.334$ behind. Across all eight models the probe reaches $0.910$ outside English where the head reaches $0.823$ (Table~\ref{tab:probeall}), and Section~\ref{sec:decomposition} splits the gap into what the model fails to encode and what it fails to read out. Figure~\ref{fig:projection_strips} shows the same claims from the read-out's own perspective, before and after the correction. RoSh acts on that directly, applying a closed-form orthogonal rotation and a mean shift to the hidden states, per language and per layer, without changing any model weight. Figure~\ref{fig:architecture} gives the picture; Appendix~\ref{app:protocol} shows how the lookup table behind it is built.

\begin{table}[!htbp]
\centering
\caption{The truth is encoded even where the verdict fails: linear probe against the model's own read-out, held out, at the first selected layer (for English, the layer most often selected first). Read-out values are those of Table~\ref{tab:perlanguage}.}
\label{tab:probe}
\small
\setlength{\tabcolsep}{4pt}
\begin{tabular}{l l cccccccc}
\toprule
Model & & EN & PT & PL & DE & AR & ZH & RU & TR \\
\midrule
Llama-3B & probe & 0.922 & 0.908 & 0.906 & 0.908 & 0.871 & 0.875 & 0.888 & 0.884 \\
         & read-out & 0.857 & 0.787 & 0.794 & 0.851 & 0.523 & 0.772 & 0.787 & 0.669 \\
\midrule
Gemma-9B & probe & 0.935 & 0.928 & 0.936 & 0.937 & 0.912 & 0.899 & 0.927 & 0.920 \\
         & read-out & 0.878 & 0.859 & 0.862 & 0.862 & 0.844 & 0.762 & 0.857 & 0.820 \\
\bottomrule
\end{tabular}
\end{table}


\begin{figure}[!htbp]
  \centering
  \includegraphics[width=\textwidth]{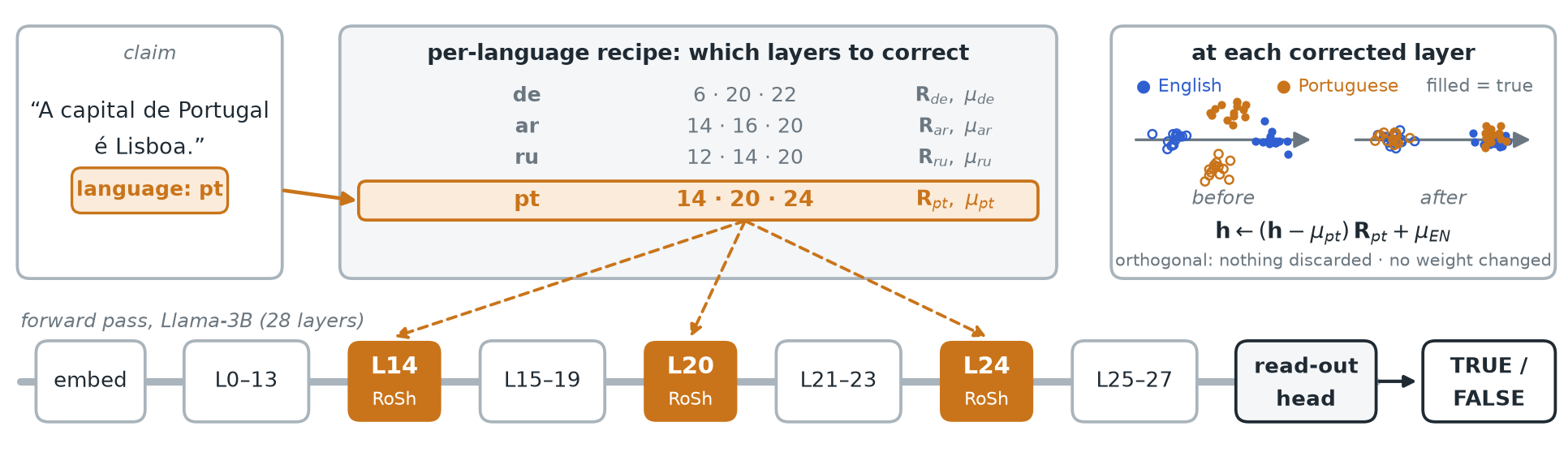}
  \caption{\textbf{Language-conditioned correction at inference.}
The language of the incoming claim selects a recipe: the layers to correct and the rotation $\R_\ell$ and shift to apply at each. Shown is Llama-3B with the Portuguese recipe. Only residual-stream activations at the listed layers are transformed; the model's weights are shared across all languages and never modified. The inset is a schematic of what the correction does at one layer, not a measurement.}
  \label{fig:architecture}
\end{figure}

\subsection{Problem Setup}
\label{sec:setup}


Let $\mathcal{M}$ be a frozen transformer of depth $L$ and residual width $d$. For a claim $x$ in language $\ell$, write $\h_l^\ell(x)\in\mathbb{R}^d$ for the residual-stream activation at layer $l$ and the answer-token position. We are given a parallel corpus $\mathcal{D}=\{(x_i^{\mathrm{en}},x_i^\ell)\}_{i=1}^{n}$ of $n$ claims expressed in both English and a target language.


The hypothesis is that the model internally represents the truth value of $x_i$ in every language, but the subspace carrying this signal in language $\ell$ is rotated relative to the English subspace that the read-out head was implicitly trained to decode.
Formally, there exists an orthogonal matrix $\R_l^\ell \in \mathbb{R}^{d \times d}$ such that,
after the affine map
\begin{equation}
  \label{eq:rosh}
  \boxed{\;\h_l^\ell \;\longleftarrow\;
    \bigl(\h_l^\ell - \bmu_l^\ell\bigr)\,\R_l^\ell
    \;+\; \bmu_l^{\mathrm{en}}\;}
\end{equation}
the transformed activations are approximately aligned with their English counterparts, and the model's own read-out head can now decode the correct factual judgement.

A toy example makes Eq.~\ref{eq:rosh} concrete. Take $d=3$ and a read-out that scores a claim by its first coordinate. In English two true claims sit at $(2,\pm1,0)$ and two false ones at $(-2,\pm1,0)$, so they score $+2$ and $-2$ and separate perfectly. In Arabic the same claims sit at $(4,7,0)$ and $(6,7,0)$ (true) and at $(4,3,0)$ and $(6,3,0)$ (false). Truth is still perfectly separated, but along the second coordinate, which the read-out ignores, so true and false claims both score $4$ or $6$ and the verdict is a coin toss. Eq.~\ref{eq:rosh} subtracts the Arabic mean $(5,5,0)$, rotates by $-90^\circ$ and adds the English mean $(0,0,0)$, taking $(4,7,0)$ to $(-1,2,0)$ and then to $(2,1,0)$, exactly its English counterpart. The other three claims land on theirs, and the scores return to $\pm2$. Figure~\ref{fig:projection_strips} shows the same effect on a larger synthetic cloud, and Appendix~\ref{app:toy} derives $\R$ for this example.

\begin{figure}[!ht]
  \centering
  \includegraphics[width=0.85\textwidth]{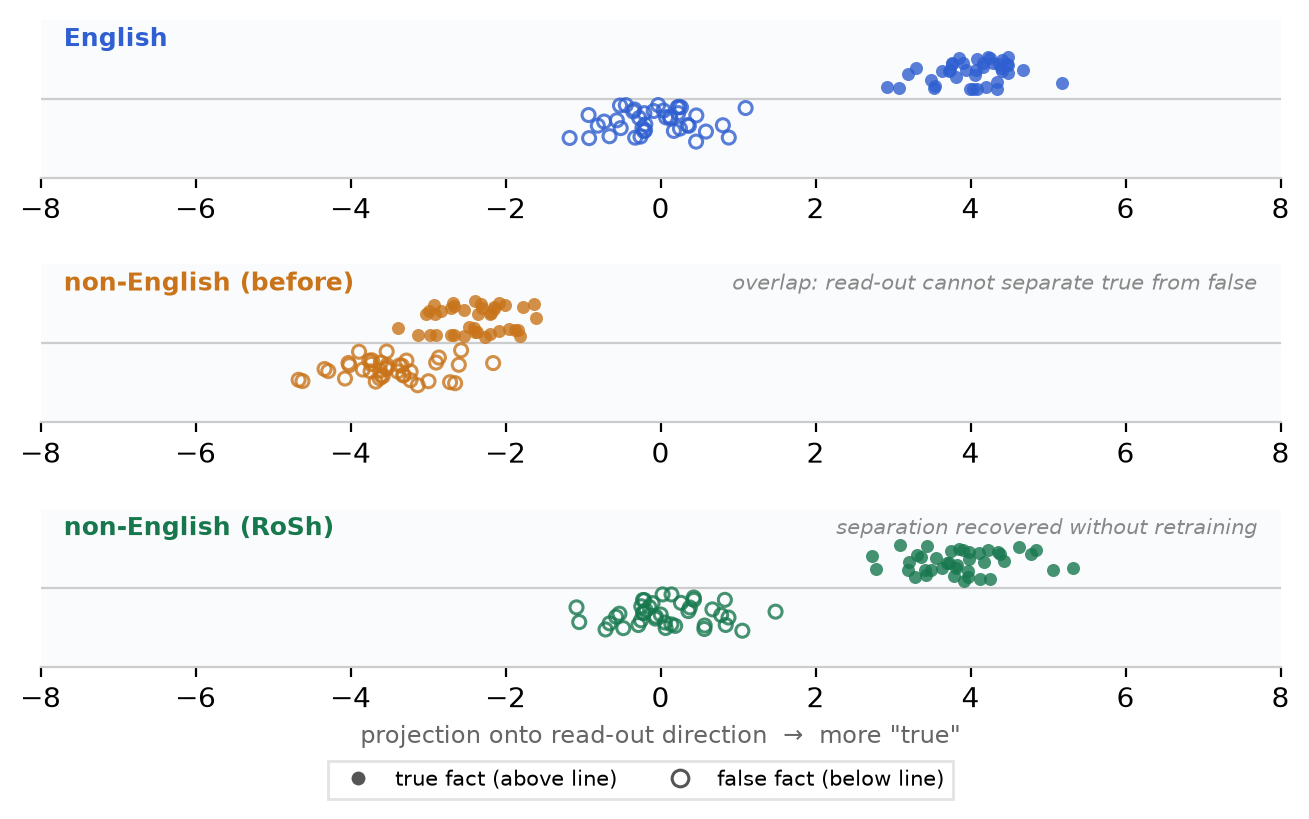}
  \caption{\textbf{What the read-out head sees.} Each dot is one claim projected onto the read-out direction, true above the centre line and false below. English (top) separates cleanly; non-English before correction (middle) overlaps almost completely; after \textsc{RoSh} (bottom) the separation matches English, without retraining. Same synthetic geometry as the inset of Figure~\ref{fig:architecture}.}
  \label{fig:projection_strips}
\end{figure}

\subsection{Fitting the Transform via Orthogonal Procrustes}
\label{sec:procrustes}

Given matched English and target-language activations, we aim to find the rotation, $\R_l^\ell$, that makes the target-language representation look as similar as possible to the English one.
In particular, we propose to recover $\R_l^\ell$ in closed form using the \emph{orthogonal Procrustes problem}~\citep{schonemann1966}.
Given the parallel activations, we first centre each language cloud:
\begin{align}
  \bmu_l^\ell     &= \frac{1}{n}\sum_{i=1}^{n} \h_l^\ell(x_i^\ell),
  &
  \bmu_l^{\mathrm{en}} &= \frac{1}{n}\sum_{i=1}^{n} \h_l^{\mathrm{en}}(x_i^{\mathrm{en}}),
  \label{eq:centroids}
  \\[4pt]
  \mathbf{A}      &= \bigl[\h_l^\ell(x_i^\ell) - \bmu_l^\ell\bigr]_{i=1}^{n}
                     \;\in\mathbb{R}^{n \times d},
  &
  \mathbf{B}      &= \bigl[\h_l^{\mathrm{en}}(x_i^{\mathrm{en}}) - \bmu_l^{\mathrm{en}}\bigr]_{i=1}^{n}
                     \;\in\mathbb{R}^{n \times d}.
  \label{eq:centred}
\end{align}
The rotation that best maps $\mathbf{A}$ onto $\mathbf{B}$ is the one minimising the Frobenius norm of the residual:
\begin{equation}
  \R_l^\ell
    = \argmin_{\R\,:\;\R^\top\!\R = \mathbf{I}}
      \bigl\lVert \mathbf{A}\,\R - \mathbf{B} \bigr\rVert_F.
  \label{eq:procrustes}
\end{equation}
A solution is given by the SVD of the cross-covariance matrix:
\begin{equation}
  \mathbf{M} = \mathbf{A}^\top \mathbf{B}
  \;=\; \mathbf{U}\,\bm{\Sigma}\,\mathbf{V}^\top,
  \qquad
  \R_l^\ell = \mathbf{U}\,\mathbf{V}^\top.
  \label{eq:svd}
\end{equation}

\paragraph{The solution is not unique, and this does not matter.}
With $n = 750$ claims and $d \ge 3{,}072$, $\mathbf{M}$ has rank below $d$, so $\R_l^\ell$ is determined only on the directions the fitting claims span and is free on the rest (Proposition~\ref{prop:family}). Those directions carry at most $1.7\%$ of a held-out activation, and switching to the canonical solution, the optimal $\R$ closest to the identity, moves held-out AUROC by less than $0.001$ (Appendix~\ref{app:nonunique}); the tables use the standard SVD solution.

The full map in Eq.~\ref{eq:rosh} is an affine isometry, distance-preserving and invertible, so it keeps all the information an activation carries about the claim's truth (Proposition~\ref{prop:info}): an improvement cannot come from discarding a distracting part of the representation, only from a change of frame.

\subsection{The Two Components}
\label{sec:decompose}

Equation~\ref{eq:rosh} composes a rotation about the non-English centroid with a translation onto the English one, and each can be run alone:
\begin{equation}
  \text{shift only:}\;\;
  \h_l^\ell \leftarrow \h_l^\ell - \bmu_l^\ell + \bmu_l^{\mathrm{en}},
  \qquad
  \text{rotation only:}\;\;
  \h_l^\ell \leftarrow \bigl(\h_l^\ell - \bmu_l^\ell\bigr)\R_l^\ell + \bmu_l^\ell .
  \label{eq:components}
\end{equation}
A translation cannot change an orientation, so under a linear read-out a pure shift would leave any ranking metric untouched; the corrected state, however, passes through further non-linear blocks, so the shift does contribute: alone it recovers $43\%$ of the full gain against $74\%$ for the rotation (the two are not additive).
\subsection{Greedy Layer Selection}
\label{sec:layer_selection}

Not every layer carries a misalignment worth correcting, and the useful ones are not independent: once a layer is corrected, the activations arriving at every later layer are different, so the best second layer depends on which one was chosen first. Searching all combinations is prohibitive, so we use greedy forward selection on the fit split alone. That is, we greedily build a small sequence of intervention layers, repeatedly choosing the layer that gives the largest additional gain after the previously selected corrections are already in place.

For each language we sweep every $s$-th layer between $15\%$ and $97\%$ of depth, with $s = 2$, or $3$ for the models of 64 to 95 layers, giving 12 to 27 candidates. We start at $15\%$ because early runs showed that the lowest layers do not yet carry enough truth for a correction to help, while later layers do: in the lowest $30\%$ of depth a probe reads the truth outside English at $0.76$ AUROC against about $0.90$ higher up, and only 2 of the 56 best single-layer corrections fall there.
The fit split is halved into two inner folds of 375 claims: at each candidate we fit $\R_l^\ell$ on one and score the corrected model's AUROC on the other. The best layer is selected and its transform refitted on all 750 fitting claims. The sweep is then repeated with that correction in place, up to $D = 3$ layers, each rotation fitted on activations as they leave the previous correction.
Greedy search keeps this affordable: on Mistral-7B it takes $14 + 13 + 12 = 39$ fits, where every three-layer combination would take $\binom{14}{3} = 364$. Ranking by AUROC uses labels on the fitting split; the transform itself does not. The output is a per-language \emph{recipe}: at most three layers, each with its rotation $\R_{l_k}^\ell$ and centroid pair $(\bmu_{l_k}^\ell,\bmu_{l_k}^{\mathrm{en}})$.

\paragraph{Inference.}
At test time no additional data is needed: the language selects its recipe, the forward pass runs as normal with the residual stream at the answer position transformed by Eq.~\ref{eq:rosh} at each listed layer, and the verdict is read from the unmodified output head. The cost is one $d \times d$ multiply at up to three layers, on a single token. Algorithm~\ref{alg:rosh} in Appendix~\ref{app:algorithm} gives the procedure in full.


\subsection{Controls}
\label{sec:controls}

Eleven controls test that the transform does what we claim. Each runs on identical data at the same
layers and breaks exactly one assumption: whether each component is needed, whether the claim-level
pairing matters, whether any perturbation of matched size would serve, and whether the map is
specific to its layer and direction. A control passes when it scores below the full method
(Appendix~\ref{app:controls}).

\section{Data and Experimental Setup}
\label{sec:setup-data}

\paragraph{Claims.}
\label{sec:data}
We evaluate on $1{,}500$ factual claims\footnote{Code and seeds: \url{https://anonymous.4open.science/r/fair-factchecking-40E8}.} with known truth values, balanced between true and
false, each in identical form in eight languages: English, Portuguese, Polish, German,
Arabic, Chinese, Russian and Turkish, spanning four scripts and four families. The set mixes
claims of the kind fact-checkers adjudicate with encyclopedic facts, and the non-English
versions are taken as released from BMLAMA-53 \citep{qi2023cross}; Appendix~\ref{app:prompts}
lists the claim types and gives examples. The claims are split once
into $750$ for fitting and $750$ held out: every transform, layer choice and threshold is
estimated on the fit half, the held-out half is scored exactly once, and a claim falls on
the same side of the split in every language (Appendix~\ref{app:setup}).

\paragraph{Models.}
\label{sec:models}
We use eight instruction-tuned models from five families, from 3B to 70B parameters and with
residual widths from $3{,}072$ to $8{,}192$ (Table~\ref{tab:models}). All are frozen and
none is fine-tuned. They run in half precision (bf16 for the two Gemma models), except
Llama-70B and DeepSeek-67B, which run in 4-bit. Every prompt is round-tripped through its
tokeniser before any activation is collected (Appendix~\ref{app:tokenizer}).

\paragraph{Scoring.}
\label{sec:prompting}
Each claim is presented in its own language with a fixed template that asks for a single
verdict token (Appendix~\ref{app:prompts}). The score of a claim is the logit difference between the \textsc{true} and
\textsc{false} tokens at the answer position,
\begin{equation}
  s(x) = \mathbf{u}^\top \h_L(x),
  \qquad
  \mathbf{u} = \mathbf{w}_{\textsc{true}} - \mathbf{w}_{\textsc{false}},
  \label{eq:score}
\end{equation}
read from the unmodified head after RoSh has changed $\h_l$ at the selected layers.

\paragraph{Metrics.}
\label{sec:metrics}
We report held-out AUROC of $s(x)$ as the primary metric, the language gap (English minus
the mean of the other seven languages) and the share of it closed, accuracy, directional
consistency, and a best-layer linear probe; Appendix~\ref{app:setup} defines each.

\paragraph{External benchmarks.}
To compare with latent-space intervention \citep{ghorbanpour2026lsi} on its own ground, we
also run \textsc{RoSh} on its two benchmarks, KLAR and mParaRel, with three 8B models
(Qwen3-8B, Llama-3.1-8B and Aya-8B), using their data and metric code
(Appendix~\ref{app:external}).

\section{Results}
\label{sec:results}

Every transform is fitted on 750 claims and scored on the 750 it never saw. Accuracy and
consistency are read at each language's median operating point, fitted on the training half.

\begin{table}[ht]
\centering
\caption{Non-English mean, held out. Gap is English minus non-English, and gap closed is
$(\mathrm{gap}_{\text{before}} - \mathrm{gap}_{\text{after}})/\mathrm{gap}_{\text{before}}$
(Appendix~\ref{app:setup}). Directional consistency is the share of claims the model gets
right in English that it also gets right in the other language. The Mean row averages the
eight model rows; its gap closed is the mean of the per-model values, with values above
100\% counted as 100\%.}
\label{tab:main}
\small
\setlength{\tabcolsep}{4pt}
\resizebox{\textwidth}{!}{%
\begin{tabular}{l ccc cc c cc}
\toprule
& & \multicolumn{2}{c}{non-EN} & \multicolumn{2}{c}{gap} & gap & & directional \\
\cmidrule(lr){3-4}\cmidrule(lr){5-6}
Model & EN & before & after & before & after & closed & accuracy & consistency \\
\midrule
Llama-3B         & 0.857 & 0.740 & 0.829 & $+0.116$ & $+0.028$ & 76\%   & $0.671 \to 0.736$ & $0.750 \to 0.841$ \\
Mistral-7B       & 0.834 & 0.704 & 0.809 & $+0.130$ & $+0.025$ & 80\%   & $0.649 \to 0.726$ & $0.715 \to 0.806$ \\
Gemma-9B         & 0.878 & 0.838 & 0.865 & $+0.040$ & $+0.012$ & 69\%   & $0.740 \to 0.759$ & $0.879 \to 0.905$ \\
Mistral-24B      & 0.888 & 0.866 & 0.878 & $+0.022$ & $+0.011$ & 53\%   & $0.756 \to 0.767$ & $0.883 \to 0.898$ \\
Gemma-27B        & 0.860 & 0.833 & 0.844 & $+0.027$ & $+0.016$ & 42\%   & $0.729 \to 0.746$ & $0.839 \to 0.851$ \\
Qwen-32B         & 0.885 & 0.879 & 0.889 & $+0.006$ & $-0.004$ & $>$100\% & $0.770 \to 0.778$ & $0.903 \to 0.911$ \\
Llama-70B        & 0.879 & 0.865 & 0.883 & $+0.014$ & $-0.003$ & $>$100\% & $0.752 \to 0.780$ & $0.895 \to 0.915$ \\
DeepSeek-67B     & 0.875 & 0.861 & 0.872 & $+0.014$ & $+0.003$ & 80\%   & $0.748 \to 0.762$ & $0.882 \to 0.892$ \\
\midrule
Mean             & 0.870 & 0.823 & 0.859 & $+0.046$ & $+0.011$ & 75\%   & $0.727 \to 0.757$ & $0.843 \to 0.877$ \\
\bottomrule
\end{tabular}}
\end{table}

The gap closes on every model, by 75\% on average, and Qwen-32B and Llama-70B end with
non-English slightly ahead of English. Where the gap was widest the correction is largest:
Llama-3B and Mistral-7B start $0.116$ and $0.130$ behind and end within $0.03$. Directional consistency rises from $0.843$ to $0.877$ (of the claims a model gets right in English, the share it gets wrong in translation falls from $15.7\%$ to $12.3\%$), so the
claims English handles correctly survive translation more often than before. Every tokeniser was verified in all eight
languages before any run (Appendix~\ref{app:tokenizer}).

\begin{table}[ht]
\centering
\caption{Baseline and corrected AUROC per language, held out. Shading shows the change: green for a gain, darker the larger it is; red for a drop; grey for no change.}
\label{tab:perlanguage}
\scriptsize
\setlength{\tabcolsep}{3pt}
\resizebox{\textwidth}{!}{%
\begin{tabular}{l ccccccc}
\toprule
Model & PT & PL & DE & AR & ZH & RU & TR \\
\midrule
Llama-3B & \cellcolor{green!36}$0.787 \to 0.840$ & \cellcolor{green!36}$0.794 \to 0.829$ & \cellcolor{green!10}$0.851 \to 0.856$ & \cellcolor{green!52}$0.523 \to 0.841$ & \cellcolor{green!22}$0.772 \to 0.798$ & \cellcolor{green!36}$0.787 \to 0.838$ & \cellcolor{green!52}$0.669 \to 0.798$ \\
Mistral-7B & \cellcolor{green!52}$0.646 \to 0.896$ & \cellcolor{green!36}$0.802 \to 0.878$ & \cellcolor{green!36}$0.807 \to 0.847$ & \cellcolor{green!36}$0.585 \to 0.674$ & \cellcolor{green!36}$0.659 \to 0.702$ & \cellcolor{green!36}$0.791 \to 0.869$ & \cellcolor{green!52}$0.639 \to 0.795$ \\
Gemma-9B & \cellcolor{green!22}$0.859 \to 0.872$ & \cellcolor{green!22}$0.862 \to 0.884$ & \cellcolor{green!10}$0.862 \to 0.871$ & \cellcolor{green!10}$0.844 \to 0.853$ & \cellcolor{green!36}$0.762 \to 0.845$ & \cellcolor{green!22}$0.857 \to 0.868$ & \cellcolor{green!36}$0.820 \to 0.865$ \\
Mistral-24B & \cellcolor{gray!15}$0.874 \to 0.874$ & \cellcolor{green!22}$0.875 \to 0.890$ & \cellcolor{green!10}$0.893 \to 0.897$ & \cellcolor{green!36}$0.859 \to 0.892$ & \cellcolor{green!10}$0.824 \to 0.828$ & \cellcolor{green!22}$0.889 \to 0.902$ & \cellcolor{green!22}$0.846 \to 0.860$ \\
Gemma-27B & \cellcolor{red!22}$0.856 \to 0.855$ & \cellcolor{green!22}$0.840 \to 0.860$ & \cellcolor{green!22}$0.847 \to 0.863$ & \cellcolor{green!22}$0.826 \to 0.841$ & \cellcolor{red!22}$0.790 \to 0.787$ & \cellcolor{green!36}$0.830 \to 0.863$ & \cellcolor{red!22}$0.842 \to 0.841$ \\
Qwen-32B & \cellcolor{green!10}$0.879 \to 0.882$ & \cellcolor{green!22}$0.888 \to 0.898$ & \cellcolor{green!10}$0.891 \to 0.900$ & \cellcolor{green!22}$0.877 \to 0.893$ & \cellcolor{green!22}$0.862 \to 0.875$ & \cellcolor{green!22}$0.892 \to 0.908$ & \cellcolor{green!10}$0.866 \to 0.868$ \\
Llama-70B & \cellcolor{green!22}$0.882 \to 0.892$ & \cellcolor{green!22}$0.884 \to 0.907$ & \cellcolor{green!22}$0.885 \to 0.896$ & \cellcolor{green!22}$0.851 \to 0.862$ & \cellcolor{green!36}$0.820 \to 0.854$ & \cellcolor{green!36}$0.872 \to 0.904$ & \cellcolor{green!10}$0.861 \to 0.866$ \\
DeepSeek-67B & \cellcolor{green!10}$0.865 \to 0.870$ & \cellcolor{green!22}$0.880 \to 0.896$ & \cellcolor{green!22}$0.871 \to 0.888$ & \cellcolor{green!22}$0.642 \to 0.658$ & \cellcolor{green!22}$0.855 \to 0.865$ & \cellcolor{green!22}$0.876 \to 0.886$ & \cellcolor{red!22}$0.820 \to 0.815$ \\
\bottomrule
\end{tabular}}
\end{table}

Arabic on Llama-3B moves from $0.523$, which is a coin toss, to $0.841$, and Portuguese on
Mistral-7B from $0.646$ to $0.896$ (Table~\ref{tab:perlanguage}). Of the 56 model-language cells, 51 improve, one is
unchanged and the largest regression is $0.005$.
Most languages that stay flat were already
close to English, so there was little left to close; Gemma-27B Chinese ($0.790$ to $0.787$)
is the exception.

\paragraph{Beyond our data.} On the two benchmarks of the closest inference-time method,
latent-space intervention (LSI), run with its own data and metric code on three 8B models,
\textsc{RoSh} raises KLAR accuracy by $5.63$ points and agreement with English by $5.48$,
against the $0.44$ and $1.02$ that \citet{ghorbanpour2026lsi} report for LSI, and raises
mParaRel agreement on Aya by $2.24$ against $0.27$ (Table~\ref{tab:external}). Neither
benchmark contains news claims, and on mParaRel no translation system its templates come
from shows a significant loss (Appendix~\ref{app:mt}), so the correction is not specific to
our data.

\subsection{What \textsc{RoSh} fixes, and what remains}
\label{sec:decomposition}

Comparing the model's read-out with a probe on the same activations splits the gap in two
(Appendix~\ref{app:angle}, Table~\ref{tab:probeall}). The non-English representation is
genuinely weaker, since a probe reaches $0.910$ against $0.935$ in English, a shortfall of
$0.026$. On top of that the head reads non-English less efficiently, losing $0.086$ AUROC
relative to the probe outside English against $0.066$ within it, a further $0.021$. The two
account for the $+0.046$ we measure.

\textsc{RoSh} acts on the second component and removes it. After correction the head loses
$0.051$ relative to the probe in non-English, below the $0.066$ it loses in English on
English's own representation, and it is at or below the English level on every model except
Mistral-24B, where it exceeds it by $0.002$. So the residual gap of $0.011$ is smaller than
the representational shortfall of $0.026$: the corrected head now reads non-English
\emph{more} efficiently than English, by $0.015$, which offsets part of what the
representation lacks.

Our analysis also turns up a finding that has nothing to do with language: even in English
the head leaves $0.066$ AUROC on the table relative to a probe on its own intermediate
activations. Read-out inefficiency therefore appears to be a general property of the model,
while non-English languages suffer an additional read-out penalty that \textsc{RoSh}
specifically removes.

\subsection{Selected layers}
\label{sec:layersselected}

The search settles on three layers for every model, mostly in the upper middle of the network
but often reaching early layers, and the stacks differ by language within a model: on Llama-3B,
Portuguese takes 14, 20 and 24 and Arabic 16, 20 and 14. The recipe is therefore fitted per
language (Appendix~\ref{app:layers}).

\subsection{Controls}
\label{sec:results_controls}

A gain on held-out data does not establish that the transform does what we claim, so we ran the eleven arms of Section~\ref{sec:controls}, each fitted and applied exactly as the method is. Averaged over eight models and seven languages, \textsc{RoSh} reaches $0.853$ against an untouched baseline of $0.820$. The eight arms scored on non-English all land below $0.853$, and the three scored on English land below the English baseline of $0.870$, which is the comparison they call for. The informative ones are close together in construction and far apart in score. Scrambling the item pairing before fitting, which preserves both clouds and destroys only which claim matches which, falls to $0.621$; a random rotation of matched size falls to $0.658$; an unconstrained linear map on the
same pairs reaches $0.786$, below doing nothing at all. Fitting English against itself returns $0.870$ and moves the score by $0.000$, which is the null the isometry argument requires. Per model the fitted map beats all three scrambled runs, with $z$ between $+1.83$ and
$+8.13$. An earlier six-model version of this study used a single language-direction steering vector, which failed this same control on four of the six, and was abandoned for that reason. Tables~\ref{tab:scrambled} and \ref{tab:arms} in Appendix~\ref{app:controlresults} give every arm.
\FloatBarrier

\section{Discussion}
\label{sec:discussion}

\paragraph{The gap is partly a read-out failure.}
A linear probe separates true from false in every language, including where the model's own verdict is near chance, so the content survives in the residual stream. Of the $0.046$ gap, $0.021$ is the head failing to reach that content, which \textsc{RoSh} removes, and $0.026$ is a weaker representation, which no isometry can repair. Because the transform is invertible (Proposition~\ref{prop:info}), an improvement can only mean the head was failing to extract something already present.

\paragraph{The constraint carries the result.}
The same pairs fitted without the orthogonality constraint give $0.786$, below the untouched baseline of $0.820$, against $0.853$ for the constrained solution: with 750 pairs in thousands of dimensions the least-squares map is ill-conditioned and distorts held-out activations (Appendix~\ref{app:ridge}). Orthogonality removes that failure by construction and is what licenses the reading above.

\paragraph{Relation to weight-space methods.}
We claim no impossibility result: fine-tuning could in principle reach a model that behaves similarly. What we claim is that the correction can be \emph{found} in closed form, from unlabelled parallel claims, in one SVD, with no gradient step and no labelled data in the target language, which is precisely the resource the affected communities lack. Nor is it a weight edit: it acts only at the answer position, differs per language and includes a shift, so it cannot be folded into the checkpoint. Two guarantees follow that have no counterpart in weight space. English is left exactly unchanged (Proposition~\ref{prop:english}; the null is $0.000$ in Table~\ref{tab:arms}), and since only one language's recipe is active in any forward pass, improving one language cannot disturb another.


\paragraph{What this means for deployment.}
The transform is a per-language matrix and a per-language vector at three layers,
estimated from 750 parallel claims, whose truth labels are used only to choose the layers,
and applied at one token
position during inference. It requires no gradient, no labelled target-language data, and no
change to the shipped checkpoint, which places it within reach of whoever deploys a model
rather than only whoever trained it. That matters for the specific unfairness we measure,
since the languages with the largest gaps are also those with the least training data, the
fewest labelled resources and the weakest locally available systems.

\subsection{Limitations}
\label{sec:limitations}

Eight languages and four scripts show the effect without characterising it across the long
tail, where the disparity is likely worst, and the non-English claims, taken from BMLAMA-53
\citep{qi2023cross}, were translated rather than written natively. On mParaRel, the second
benchmark of \citet{ghorbanpour2026lsi}, whose templates come from five translation systems,
the correction survives a split by system (Appendix~\ref{app:mt}). All results use one verdict-token template, so we cannot speak to
chain-of-thought or open-ended answers. The method needs claims paired across languages and
does not apply where no parallel corpus exists. Truth labels do not fit the transform but do
enter through layer selection, so the label-free claim covers the transform rather than the
whole recipe. And the diagnosis rests on probes: we do not identify which
part of the head fails, or what puts non-English representations in a different frame.

\section{Conclusion}
\label{sec:conclusion}

Ask a model to judge the same claim in English and then in another language and the two
verdicts often disagree. Across eight models from five families and $1{,}500$ claims rendered
identically in eight languages, English leads by $+0.046$ AUROC on average and by $0.334$ on
Arabic for Llama-3B, where the model drops to $0.523$ and is guessing. Every claim appears in
every language, so this is not a matter of some questions being harder.

Part of what the model needs is already there. A linear probe recovers the truth from
mid-network activations even where the model's own verdict sits at chance, and separating the
two shows that nearly half of the gap is the read-out failing to reach what is encoded,
the rest being a genuinely weaker representation. \textsc{RoSh} targets the first part, with a
per-language shift and rotation of the residual stream at three layers, computed in closed
form from unlabelled parallel claims, no gradient taken and no weight modified. It closes
$75\%$ of the gap on average and removes it on two models, lifts accuracy from $0.727$ to
$0.757$, and cuts the share of English-correct claims lost in translation by a fifth, from
$15.7\%$ to $12.3\%$. The same recipe transfers to two benchmarks built by other authors, where
its gains are five to thirteen times those of the closest inference-time method, on that
method's own data and metric code.

An unconstrained map on the same pairs falls below doing nothing at all. What is left is to reach the long tail of languages, where the disparity is likely
largest and parallel data scarcest, and to find what puts non-English representations in a
different frame in the first place.


\subsection*{AI use statement}
In this work, we used generative AI tools (a large language model assistant) for editing
and condensing the manuscript text, drafting and checking LaTeX, writing experiment, analysis
and plotting code, and cross-checking the numbers reported in the text against our result
files. We have not used generative AI tools to produce any experimental result: every number
in the paper is computed by our code from the models' outputs, and every AI-drafted script
was run and checked by the authors.
We have reviewed all AI-assisted work and take responsibility for the final content of this
work, including text, claims or artifacts produced with the aid of generative AI.


\bibliography{references}

@techreport{reuters2026,
  title       = {Digital News Report 2026},
  author      = {Newman, Nic and Fletcher, Richard and Robertson, Craig T.
                 and Arguedas, Amy Ross and Nielsen, Rasmus Kleis},
  institution = {Reuters Institute for the Study of Journalism,
                 University of Oxford},
  year        = {2026},
  note        = {Survey of approximately 2,000 respondents in each of 48 markets},
  url         = {https://reutersinstitute.politics.ox.ac.uk/digital-news-report/2026}
}

@inproceedings{lucas2026bluff,
  title     = {{BLUFF}: Benchmarking the Detection of False and Synthetic
               Content across 58 Low-Resource Languages},
  author    = {Lucas, Jason and Murtagh-White, Matt and Uchendu, Adaku and
               Al-Lawati, Ali and Yamashita, Michiharu and Macko, Dominik and
               Srba, Ivan and Moro, Robert and Lee, Dongwon},
  booktitle = {Proceedings of the 32nd ACM SIGKDD International Conference on
               Knowledge Discovery and Data Mining (KDD)},
  year      = {2026},
  address   = {Jeju, Korea}
}

@inproceedings{lucas2026diaharm,
  title     = {{DIA-HARM}: Dialectal Disparities in Harmful Content Detection
               Across 50 English Dialects},
  author    = {Lucas, Jason and Murtagh-White, Matt and Al-Lawati, Ali and
               Uchendu, Uchendu and Uchendu, Adaku and Lee, Dongwon},
  booktitle = {Proceedings of the 64th Annual Meeting of the Association for
               Computational Linguistics (ACL)},
  year      = {2026},
  address   = {San Diego, CA}
}

@article{wendler2024llamas,
  title   = {Do Llamas Work in English? On the Latent Language of
             Multilingual Transformers},
  author  = {Wendler, Chris and Veselovsky, Veniamin and Monea, Giovanni and
             West, Robert},
  journal = {arXiv preprint arXiv:2402.10588},
  year    = {2024}
}

@inproceedings{azaria2023internal,
  title     = {The Internal State of an {LLM} Knows When It's Lying},
  author    = {Azaria, Amos and Mitchell, Tom},
  booktitle = {Findings of the Association for Computational Linguistics:
               EMNLP 2023},
  year      = {2023}
}

@article{marks2023geometry,
  title   = {The Geometry of Truth: Emergent Linear Structure in Large
             Language Model Representations of True/False Datasets},
  author  = {Marks, Samuel and Tegmark, Max},
  journal = {arXiv preprint arXiv:2310.06824},
  year    = {2023}
}

@inproceedings{burger2024truth,
  title     = {Truth is Universal: Robust Detection of Lies in {LLM}s},
  author    = {B{\"u}rger, Lennart and others},
  booktitle = {Advances in Neural Information Processing Systems (NeurIPS)},
  year      = {2024}
}

@inproceedings{wang2025incline,
    title     = "Bridging the Language Gaps in Large Language Models with
                 Inference-Time Cross-Lingual Intervention",
    author    = "Wang, Weixuan and Wu, Minghao and Haddow, Barry and
                 Birch, Alexandra",
    booktitle = "Proceedings of the 63rd Annual Meeting of the Association for
                 Computational Linguistics (Volume 1: Long Papers)",
    month     = jul,
    year      = "2025",
    address   = "Vienna, Austria",
    publisher = "Association for Computational Linguistics",
    url       = "https://aclanthology.org/2025.acl-long.270/",
    doi       = "10.18653/v1/2025.acl-long.270",
    pages     = "5418--5433"
}

@article{schonemann1966,
  title   = {A Generalized Solution of the Orthogonal {P}rocrustes Problem},
  author  = {Sch{\"o}nemann, Peter H.},
  journal = {Psychometrika},
  volume  = {31},
  number  = {1},
  pages   = {1--10},
  year    = {1966},
  doi     = {10.1007/BF02289451}
}

@inproceedings{smith2017offline,
  title     = {Offline Bilingual Word Vectors, Orthogonal Transformations and
               the Inverted Softmax},
  author    = {Smith, Samuel L. and Turban, David H. P. and Hamblin, Steven and
               Hammerla, Nils Y.},
  booktitle = {International Conference on Learning Representations (ICLR)},
  year      = {2017}
}

@inproceedings{conneau2018word,
  title     = {Word Translation Without Parallel Data},
  author    = {Conneau, Alexis and Lample, Guillaume and Ranzato, Marc'Aurelio
               and Denoyer, Ludovic and J{\'e}gou, Herv{\'e}},
  booktitle = {International Conference on Learning Representations (ICLR)},
  year      = {2018}
}

@article{allcott2017social,
  title   = {Social Media and Fake News in the 2016 Election},
  author  = {Allcott, Hunt and Gentzkow, Matthew},
  journal = {Journal of Economic Perspectives},
  volume  = {31},
  number  = {2},
  pages   = {211--236},
  year    = {2017},
  doi     = {10.1257/jep.31.2.211}
}

@article{vosoughi2018spread,
  title   = {The spread of true and false news online},
  author  = {Vosoughi, Soroush and Roy, Deb and Aral, Sinan},
  journal = {Science},
  volume  = {359},
  number  = {6380},
  pages   = {1146--1151},
  year    = {2018},
  doi     = {10.1126/science.aap9559}
}

@article{babaei2021analyzing,
  title={Analyzing biases in perception of truth in news stories and their implications for fact checking},
  author={Babaei, Mahmoudreza and Kulshrestha, Juhi and Chakraborty, Abhijnan and Redmiles, Elissa M and Cha, Meeyoung and Gummadi, Krishna P},
  journal={IEEE Transactions on Computational Social Systems},
  volume={9},
  number={3},
  pages={839--850},
  year={2021},
  publisher={IEEE}
}

@article{ghorbanpour2026lsi,
  title   = {Latent-Space Intervention for Cross-Lingual Factual Consistency:
             Consistency Improvements without Accuracy Drops},
  author  = {Ghorbanpour, Faeze and Fierro, Constanza and Fraser, Alexander
             and S{\o}gaard, Anders},
  journal = {arXiv preprint arXiv:2608.28860},
  year    = {2026},
  note    = {To appear at EMNLP 2026},
  doi     = {10.48550/arXiv.2608.28860},
  url     = {https://arxiv.org/abs/2608.28860}
}

@article{singhal2024comparative,
  title={A comparative study of translation bias and accuracy in multilingual large language models for cross-language claim verification},
  author={Singhal, Aryan and Shao, Veronica and Sun, Gary and Ding, Ryan and Lu, Jonathan and Zhu, Kevin},
  journal={arXiv preprint arXiv:2410.10303},
  year={2024}
}

@inproceedings{qi2023cross,
  title={Cross-lingual consistency of factual knowledge in multilingual language models},
  author={Qi, Jirui and Fern{\'a}ndez, Raquel and Bisazza, Arianna},
  booktitle={Proceedings of the 2023 Conference on Empirical Methods in Natural Language Processing},
  pages={10650--10666},
  year={2023}
}

@inproceedings{han2026mubench,
  title={Mubench: Assessment of multilingual capabilities of large language models across 61 languages},
  author={Han, Wenhan and Zhang, Yifan and Chen, Zhixun and Pechenizkiy, Mykola and Fang, Meng and Zheng, Yin and others},
  booktitle={Findings of the Association for Computational Linguistics: ACL 2026},
  pages={16163--16192},
  year={2026}
}

@article{liu2025entity,
  title={On the entity-level alignment in crosslingual consistency},
  author={Liu, Yihong and Wang, Mingyang and Yvon, Fran{\c{c}}ois and Sch{\"u}tze, Hinrich},
  journal={arXiv preprint arXiv:2510.10280},
  year={2025}
}

@article{piratla2025rethinking,
  title={Rethinking Cross-lingual Gaps from a Statistical Viewpoint},
  author={Piratla, Vihari and Jain, Purvam and Singh, Darshan and Cohn, Trevor and Jyothi, Preethi and Talukdar, Partha},
  journal={arXiv preprint arXiv:2510.15551},
  year={2025}
}

@inproceedings{lu2025paths,
  title={Paths not taken: Understanding and mending the multilingual factual recall pipeline},
  author={Lu, Meng and Zhang, Ruochen and Eickhoff, Carsten and Pavlick, Ellie},
  booktitle={Proceedings of the 2025 Conference on Empirical Methods in Natural Language Processing},
  pages={15077--15107},
  year={2025}
}

@article{meng2022locating,
  title={Locating and editing factual associations in gpt},
  author={Meng, Kevin and Bau, David and Andonian, Alex and Belinkov, Yonatan},
  journal={Advances in neural information processing systems},
  volume={35},
  pages={17359--17372},
  year={2022}
}

@inproceedings{burns2022discovering,
  title     = {Discovering Latent Knowledge in Language Models Without Supervision},
  author    = {Burns, Collin and Ye, Haotian and Klein, Dan and Steinhardt, Jacob},
  booktitle = {International Conference on Learning Representations (ICLR)},
  year      = {2023}
}

@inproceedings{mahmoud2026improving,
  title={Improving multilingual language models by aligning representations through steering},
  author={Mahmoud, Omar},
  booktitle={Proceedings of the Fifteenth Language Resources and Evaluation Conference (LREC 2026)},
  pages={2090--2103},
  year={2026}
}

@inproceedings{agarwal2025aligning,
  title={Aligning llms for multilingual consistency in enterprise applications},
  author={Agarwal, Amit and Meghwani, Hansa and Patel, Hitesh Laxmichand and Sheng, Tao and Ravi, Sujith and Roth, Dan},
  booktitle={Proceedings of the 2025 Conference on Empirical Methods in Natural Language Processing: Industry Track},
  pages={117--137},
  year={2025}
}
\bibliographystyle{iclr2027_conference}

\appendix

\section{Experimental Setup Details}
\label{app:setup}

\paragraph{Claims.}
We evaluate on $1{,}500$ factual claims with known truth values, balanced between true and
false. The set deliberately mixes two kinds of statement. The first are claims of the sort
that circulate publicly and are adjudicated by fact-checking organisations; the second are
encyclopedic facts of the kind used in cross-lingual knowledge probes. The claims are drawn from BMLAMA-53 \citep{qi2023cross}, released under the
Apache 2.0 licence; a false claim pairs a subject and relation with a distractor object in
place of the true one, and the set is balanced at 750 true and 750 false. Mixing the two lets us test whether the
language gap is a property of contested public claims or of factual verification in
general; Section~\ref{sec:results} reports it separately for each half.

Each claim exists in identical form in eight languages: English, Portuguese, Polish,
German, Arabic, Chinese, Russian and Turkish. The eight span four scripts and four language
families, and cover a wide range of representation in typical pretraining corpora, which is
the axis the gap is expected to follow.

\paragraph{Translation.}
The non-English versions are taken as released from BMLAMA-53 \citep{qi2023cross}, whose
language versions were built by translating English prompt--object pairs from earlier
multilingual probing sets, with entity names taken from Wikidata labels in each language; we
did not translate or review them ourselves. Because translation quality is a possible
confounder, Appendix~\ref{app:mt} checks the correction on mParaRel, whose templates were
produced by five machine-translation systems, separately for each system and restricted to
claims the model answers correctly in English so that item difficulty is held fixed.

\paragraph{Fit and held-out split.}
The $1{,}500$ claims are split once into $750$ for fitting and $750$ held out. Every
transform, every layer-selection decision and every operating threshold is estimated on the
fit half alone. The held-out half is scored exactly once, and all numbers reported in
Section~\ref{sec:results} come from it. Because a claim appears in all eight languages, it
falls on the same side of the split in every language, so no language is ever evaluated on
a claim another language was fitted on.

\paragraph{Models.}
We evaluate eight instruction-tuned models from five families, spanning 3B to 70B parameters
and residual widths from $3{,}072$ to $8{,}192$ (Table~\ref{tab:models}). All are used
frozen, in half precision (bf16 for the two Gemma models; Llama-70B and DeepSeek-67B in
4-bit), with a single forward pass per claim and no decoding. No
model is fine-tuned at any point.

\begin{table}[ht]
\centering
\caption{Models evaluated. Depth is the number of transformer blocks and $d$ the residual
width; both determine the size of the layer sweep and of the fitted transform.}
\label{tab:models}
\small
\begin{tabular}{l l c c}
\toprule
Model & Checkpoint & depth $L$ & width $d$ \\
\midrule
Llama-3B         & \texttt{unsloth/Llama-3.2-3B-Instruct} & 28 & $3{,}072$ \\
Mistral-7B       & \texttt{unsloth/mistral-7b-instruct-v0.3} & 32 & $4{,}096$ \\
Gemma-9B         & \texttt{unsloth/gemma-2-9b-it} & 42 & $3{,}584$ \\
Mistral-24B      & \texttt{unsloth/Mistral-Small-24B-Instruct-2501} & 40 & $5{,}120$ \\
Gemma-27B        & \texttt{unsloth/gemma-2-27b-it} & 46 & $4{,}608$ \\
Qwen-32B         & \texttt{Qwen/Qwen2.5-32B-Instruct} & 64 & $5{,}120$ \\
Llama-70B        & \texttt{unsloth/Llama-3.3-70B-Instruct-bnb-4bit} & 80 & $8{,}192$ \\
DeepSeek-67B     & \texttt{deepseek-ai/deepseek-llm-67b-chat} & 95 & $8{,}192$ \\
\bottomrule
\end{tabular}
\end{table}

\paragraph{Tokeniser verification.}
One model initially resolved to a tokeniser that silently discarded non-Latin script, so
that Chinese claims tokenised to zero tokens and every Arabic, Chinese and Russian prompt
was identical and empty (Appendix~\ref{app:tokenizer}). We therefore round-trip every prompt
in every language through encode and decode for every model, and require exact recovery
before any activation is collected. All eight models pass this check with the checkpoints
listed above.

\paragraph{Prompting and scoring.}
Each claim is presented in its own language with a fixed template that asks for a single
verdict token:

\begin{quote}\small\ttfamily\raggedright
Is the following statement true or false? Answer with only the word TRUE or FALSE.\\ Statement: \{claim\}
\end{quote}

\noindent
and its translation into each target language. The instruction is written in the claim's
own language, while the answer words stay TRUE and FALSE in every language; the prompt is
wrapped in each model's chat template. The score for a claim is the logit
difference between the \textsc{true} and \textsc{false} continuation tokens at the answer
position (Eq.~\ref{eq:score}), where $\mathbf{w}$ are rows of the unembedding matrix and each
of \textsc{true} and \textsc{false} takes the highest-scoring of its capitalisation and
leading-space variants. This is the quantity RoSh acts on
indirectly: the transform changes $\h_l$ at selected layers, and $s$ is read from the
unmodified head.

\paragraph{AUROC.}
Our primary metric is the area under the ROC curve of $s(x)$ against the true label. We use
a ranking metric rather than accuracy because several models are strongly biased toward one
verdict in some languages, which depresses accuracy for a reason unrelated to whether the
model can separate true from false. AUROC measures separability independently of where the
decision threshold sits.

\paragraph{Language gap.}
The gap for a model is the mean AUROC in English minus the mean over the seven other
languages. \emph{Gap closed} is the reduction in that quantity after correction, as a
fraction of the gap before:
\begin{equation*}
  \mathrm{gap} = \mathrm{AUROC}_{\mathrm{en}} - \frac{1}{7}\sum_{\ell \neq \mathrm{en}} \mathrm{AUROC}_{\ell},
  \qquad
  \text{closed} = \frac{\mathrm{gap}_{\text{before}} - \mathrm{gap}_{\text{after}}}{\mathrm{gap}_{\text{before}}},
\end{equation*}
computed per model. The correction leaves English untouched, so only the non-English term
changes. A value above $100\%$ means the corrected non-English mean overtakes English; when
averaging over models, such values count as $100\%$.

\paragraph{Accuracy.}
Accuracy is reported at each language's median operating point, with the threshold fitted
on the training half and applied unchanged to the held-out half.

\paragraph{Directional consistency.}
Consistency is $1 - P(\text{other language wrong} \mid \text{English right})$: among the
claims the model gets right in English, the fraction that survive translation. This is
directional by design. A model that is uniformly wrong in two languages agrees with itself,
and a symmetric agreement score would reward it; ours does not.

\paragraph{Probe.}
To test whether the information is present before the read-out, we fit an L2-regularised
logistic probe on the activation at each candidate layer, with the regularisation strength
chosen by cross-validation on the fit half, and score it once on the held-out half, per
language; we report the best candidate layer. The probe bounds what any linear reader can
extract at that depth. The part of the difference between the probe and the model's own
read-out that exceeds English's own difference is the quantity our method targets
(Section~\ref{sec:decomposition}, Appendix~\ref{app:angle}).

\FloatBarrier

\section{Prompts and Claims}
\label{app:prompts}

\paragraph{Prompt.}
Each claim is sent as a single user message, wrapped in the model's own chat template with the
generation prompt appended, so the next token is the start of the model's answer. The message
is an instruction in the claim's language followed by the claim:

\begin{table}[ht]
\centering
\caption{The instruction in each Latin-script language, verbatim. The Arabic, Chinese and
Russian versions are equivalent translations and are given verbatim in the code release
(\texttt{INSTR} in \texttt{run\_lookup.py}). In every language the answer words stay
\textsc{true} and \textsc{false}.}
\label{tab:prompts}
\small
\begin{tabular}{l p{0.84\textwidth}}
\toprule
Language & Message \\
\midrule
English & Is the following statement true or false? Answer with only the word TRUE or FALSE.\newline Statement: \{claim\} \\
Portuguese & A seguinte afirmação é verdadeira ou falsa? Responda apenas com a palavra TRUE ou FALSE.\newline Afirmação: \{claim\} \\
Polish & Czy poniższe stwierdzenie jest prawdziwe czy fałszywe? Odpowiedz tylko słowem TRUE albo FALSE.\newline Stwierdzenie: \{claim\} \\
German & Ist die folgende Aussage wahr oder falsch? Antworte nur mit dem Wort TRUE oder FALSE.\newline Aussage: \{claim\} \\
Turkish & Aşağıdaki ifade doğru mu yanlış mı? Sadece TRUE veya FALSE kelimesiyle cevap ver.\newline İfade: \{claim\} \\
\bottomrule
\end{tabular}
\end{table}

\noindent
No answer is generated and none is forced. After one forward pass we read the logits for the
next token and score the claim as the highest logit among the \textsc{true} variants
(\texttt{TRUE}, \texttt{True}, with and without a leading space) minus the highest among the
\textsc{false} variants (Eq.~\ref{eq:score}). The score is the model's relative preference
between the two answer words; AUROC ranks claims by it, and accuracy thresholds it at the
median score of the fit half.

\paragraph{Example claims.}
A true claim and its false counterpart differ only in the object: the false one takes a
distractor from BMLAMA's candidate list, at the same position in every language
(Table~\ref{tab:claimexamples}). The Polish renderings show the limitation noted in
Section~\ref{sec:limitations}: BMLAMA fills templates without declining nouns, so
the object keeps its dictionary form (\emph{z Bułgaria}) where Polish grammar requires the
instrumental case.

\begin{table}[ht]
\centering
\caption{Two claims of the dataset in the five Latin-script languages.}
\label{tab:claimexamples}
\small
\setlength{\tabcolsep}{4pt}
\begin{tabular}{l l l}
\toprule
Language & true & false \\
\midrule
English    & Colorado shares border with Wyoming.       & Croatia shares border with Bulgaria. \\
Portuguese & Colorado compartilha fronteira com Wyoming. & Croácia compartilha fronteira com Bulgária. \\
Polish     & Kolorado graniczy z Wyoming.               & Chorwacja graniczy z Bułgaria. \\
German     & Colorado teilt die Grenze mit Wyoming.     & Kroatien teilt die Grenze mit Bulgarien. \\
Turkish    & Colorado, Wyoming ile sınır paylaşıyor.    & Hırvatistan, Bulgaristan ile sınır paylaşıyor. \\
\bottomrule
\end{tabular}
\end{table}

\paragraph{Claim types.}
The 1,500 claims come from 18 BMLAMA relation templates; Table~\ref{tab:relations} gives each
with its number of true and false claims. Three relations, twin cities, shared borders and
official language, make up two thirds of the set.

\begin{table}[ht]
\centering
\caption{Relation templates in the claim set, with the number of claims ($S$ subject, $O$
object).}
\label{tab:relations}
\small
\begin{tabular}{l rrr}
\toprule
Template & true & false & total \\
\midrule
$S$ and $O$ are twin cities.                 & 290 & 283 & 573 \\
$S$ shares border with $O$.                  & 163 & 145 & 308 \\
The official language of $S$ is $O$.         & 100 & 117 & 217 \\
$S$ is the capital of $O$.                   &  55 &  62 & 117 \\
$S$ is located in $O$.                       &  51 &  50 & 101 \\
The capital of $S$ is $O$.                   &  43 &  33 &  76 \\
$S$ is part of $O$.                          &   6 &  11 &  17 \\
$S$ consists of $O$.                         &   8 &   8 &  16 \\
$S$ used to work in $O$.                     &   8 &   8 &  16 \\
$S$ works in the field of $O$.               &   5 &   8 &  13 \\
The native language of $S$ is $O$.          &   7 &   5 &  12 \\
$S$ is named after $O$.                      &   4 &   8 &  12 \\
$S$ is affiliated with the $O$ religion.     &   5 &   5 &  10 \\
$S$ is $O$ citizen.                          &   0 &   3 &   3 \\
$S$ used to communicate in $O$.              &   2 &   1 &   3 \\
$S$ is a $O$.                                &   2 &   1 &   3 \\
$S$ has the position of $O$.                 &   0 &   2 &   2 \\
$S$ died in $O$.                             &   1 &   0 &   1 \\
\midrule
Total                                        & 750 & 750 & 1{,}500 \\
\bottomrule
\end{tabular}
\end{table}
\FloatBarrier

\section{Non-Uniqueness of the Fitted Rotation}
\label{app:nonunique}

\paragraph{The issue.}
We fit on $n = 750$ parallel claims while the residual width $d$ ranges from
$3{,}072$ to $8{,}192$. The cross-covariance $\mathbf{M} = \mathbf{A}^\top\mathbf{B}$
therefore has rank $r \le n-1 < d$, so its singular value decomposition has at least
$d - r$ zero singular values, and the corresponding columns of $\mathbf{U}$ and
$\mathbf{V}$ are an arbitrary orthonormal basis of their null spaces. The minimiser of
the Procrustes objective is consequently not unique.

\begin{proposition}[The family of optimal rotations]
\label{prop:family}
Let $r = \operatorname{rank}(\mathbf{M})$ and partition the full SVD as
$\mathbf{U} = [\mathbf{U}_r\;\mathbf{U}_0]$, $\mathbf{V} = [\mathbf{V}_r\;\mathbf{V}_0]$
by the nonzero and zero singular values. For every $\mathbf{W}$ in the orthogonal group
$\mathcal{O}(d-r)$, the matrix
\begin{equation}
  \mathbf{R}_{\mathbf{W}}
  = \mathbf{U}_r\mathbf{V}_r^\top + \mathbf{U}_0\,\mathbf{W}\,\mathbf{V}_0^\top
  \label{eq:family}
\end{equation}
is orthogonal and attains the same minimum of
$\lVert\mathbf{A}\mathbf{R}-\mathbf{B}\rVert_F$. The set of minimisers is therefore a
manifold of dimension $(d-r)(d-r-1)/2$.
\end{proposition}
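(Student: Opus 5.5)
The plan is to reduce the Procrustes objective to a linear trace maximisation and then read off both the optimal value and the full set of maximisers from the SVD in Eq.~\ref{eq:svd}. For any orthogonal $\mathbf{R}$, $\lVert\mathbf{A}\mathbf{R}\rVert_F=\lVert\mathbf{A}\rVert_F$. Expanding the square therefore gives
\begin{equation*}
\lVert\mathbf{A}\mathbf{R}-\mathbf{B}\rVert_F^2=\lVert\mathbf{A}\rVert_F^2+\lVert\mathbf{B}\rVert_F^2-2\operatorname{tr}(\mathbf{R}^\top\mathbf{M}).
\end{equation*}
Minimising the objective is thus the same as maximising $\operatorname{tr}(\mathbf{R}^\top\mathbf{M})$ over $\mathcal{O}(d)$. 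I will change variables to $\mathbf{Q}=\mathbf{U}^\top\mathbf{R}\mathbf{V}$, which is again orthogonal and ranges over all of $\mathcal{O}(d)$ as $\mathbf{R}$ does. In these variables $\operatorname{tr}(\mathbf{R}^\top\mathbf{M})=\operatorname{tr}(\mathbf{Q}^\top\bm{\Sigma})=\sum_{i=1}^{r}\sigma_i Q_{ii}$, because only the first $r$ singular values are nonzero.

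Since $\mathbf{Q}$ is orthogonal, every entry satisfies $|Q_{ii}|\le 1$. Hence the trace is at most $\sum_{i\le r}\sigma_i$, and this bound is attained by $\mathbf{Q}=\mathbf{I}$, i.e.\ by $\mathbf{R}=\mathbf{U}\mathbf{V}^\top$. For the family, write $\mathbf{R}_{\mathbf{W}}=\mathbf{U}\,\mathrm{diag}(\mathbf{I}_r,\mathbf{W})\,\mathbf{V}^\top$. This is a product of orthogonal matrices, so it is orthogonal. Its $\mathbf{Q}$ is $\mathrm{diag}(\mathbf{I}_r,\mathbf{W})$, whose first $r$ diagonal entries equal $1$, so it attains the same maximal trace and hence the same minimum of the objective. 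This settles the first two claims of Proposition~\ref{prop:family}.

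For the description of the minimiser set I need the converse: every minimiser is of the form $\mathbf{R}_{\mathbf{W}}$. This equality case is the step I expect to need the most care. Equality in the bound forces $Q_{ii}=1$ for every $i\le r$, since each $\sigma_i>0$. A column of an orthogonal matrix has unit norm, so $Q_{ii}=1$ forces the rest of column $i$ to vanish; the same holds for row $i$. Consequently $\mathbf{Q}=\mathrm{diag}(\mathbf{I}_r,\mathbf{W})$, where the block $\mathbf{W}$ must itself be orthogonal of size $d-r$. One point to address is that $\mathbf{U}_r$ and $\mathbf{V}_r$ are not unique when nonzero singular values repeat. I will handle it by noting that $\mathbf{U}_r\mathbf{V}_r^\top$ is the partial isometry in the polar decomposition of $\mathbf{M}$, so it does not depend on that choice. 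Fixing one SVD then suffices.

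Finally, the map $\mathbf{W}\mapsto\mathbf{R}_{\mathbf{W}}$ is the restriction of a linear, injective map (conjugation by the fixed orthogonal matrices $\mathbf{U}$ and $\mathbf{V}$). It is therefore a smooth embedding of the compact manifold $\mathcal{O}(d-r)$, and its image, the minimiser set, is a manifold diffeomorphic to $\mathcal{O}(d-r)$. That manifold has dimension $(d-r)(d-r-1)/2$, the dimension of the space of skew-symmetric $(d-r)\times(d-r)$ matrices. It has two connected components, and in the degenerate case $d-r=1$ it consists of just two points. Since $r\le n-1<d$ here, $d-r\ge 1$, so the family is never trivial.
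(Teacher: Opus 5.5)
Your proposal is correct and follows the paper's proof essentially step for step: both reduce the problem to maximising $\operatorname{tr}(\mathbf{R}^\top\mathbf{M})$, change variables to $\mathbf{U}^\top\mathbf{R}\mathbf{V}$, bound the trace using $|Q_{ii}|\le 1$, and in the equality case use the unit-row and unit-column argument to force $\operatorname{diag}(\mathbf{I}_r,\mathbf{W})$. Your additional remarks, namely that $\mathbf{U}_r\mathbf{V}_r^\top$ does not depend on the choice of SVD (via the polar decomposition) and the embedding argument for the manifold claim, make explicit points the paper leaves implicit; one small correction is that the relevant linear isomorphism there is $\mathbf{Q}\mapsto\mathbf{U}\mathbf{Q}\mathbf{V}^\top$, which is not a conjugation since $\mathbf{U}\neq\mathbf{V}$ in general.
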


\begin{proof}
Since
$\lVert\mathbf{A}\mathbf{R}-\mathbf{B}\rVert_F^2
= \lVert\mathbf{A}\rVert_F^2+\lVert\mathbf{B}\rVert_F^2-2\operatorname{tr}(\mathbf{R}^\top\mathbf{M})$,
minimising the residual is maximising $\operatorname{tr}(\mathbf{R}^\top\mathbf{M})$.
For any orthogonal $\mathbf{R}$, the matrix $\mathbf{W}' = \mathbf{U}^\top\mathbf{R}\mathbf{V}$ is
orthogonal and $\operatorname{tr}(\mathbf{R}^\top\mathbf{M}) = \sum_j \sigma_j W'_{jj}
\le \sum_{j\le r}\sigma_j$, with equality exactly when $W'_{jj} = 1$ for every $j \le r$.
Because the rows and columns of $\mathbf{W}'$ are unit vectors, this forces
$\mathbf{W}' = \operatorname{diag}(\mathbf{I}_r, \mathbf{W})$ with $\mathbf{W}\in\mathcal{O}(d-r)$,
which is \eqref{eq:family}. Distinct $\mathbf{W}$ give distinct $\mathbf{R}_{\mathbf{W}}$, and
$\mathcal{O}(d-r)$ has dimension $(d-r)(d-r-1)/2$.
\end{proof}

At $d = 4096$ and $r = 749$ this family has dimension
$3347 \times 3346 / 2 \approx 5.6\times 10^{6}$. All of its members fit the 750 training
pairs equally well, and they differ only on directions the data never constrains.

\paragraph{The canonical choice.}
A well-defined choice among them is the optimal rotation closest to the identity,
\begin{equation}
  \mathbf{R}^\star
  = \arg\min_{\mathbf{W}\in\mathcal{SO}(d-r)}\;
    \lVert\mathbf{R}_{\mathbf{W}} - \mathbf{I}\rVert_F ,
  \label{eq:canonical}
\end{equation}
restricted to $\det\mathbf{R} = +1$ so that the transform is a proper rotation rather
than a rotoreflection. Because
$\lVert\mathbf{R}-\mathbf{I}\rVert_F^2 = 2d - 2\operatorname{tr}(\mathbf{R})$, the
minimiser of \eqref{eq:canonical} is the maximiser of
$\operatorname{tr}(\mathbf{W}\mathbf{V}_0^\top\mathbf{U}_0)$, which is a Procrustes
problem in $\mathcal{O}(d-r)$ and is unique whenever
$\mathbf{V}_0^\top\mathbf{U}_0$ is nonsingular.

\begin{remark}
This choice is well defined, reproducible across linear-algebra implementations, and
always a proper rotation. It changes nothing on the $r$ directions the data fixes, so
the training residual is identical to that of any other optimal solution.
\end{remark}

\paragraph{What we verified.}
A refit changes $\mathbf{R}$ only on the $d - r$ directions the data leaves free. Held-out
activations do carry components in those directions, so held-out behaviour is not
guaranteed to be unchanged. We therefore measured, for each held-out claim, the share of its
centred activation's squared norm that lies inside the span of the fitting activations, and
refitted Llama-3B and Gemma-9B under \eqref{eq:canonical}, keeping the data, split, selected
layers and sequential fit of the main runs fixed. All numbers in the main text use the
implementation-default solution; these two refits are a check. The in-span share is
$99.1\%$ on Llama-3B and $98.8\%$ on Gemma-9B, and at least $98.3\%$ at every fitted layer
(equivalently, the out-of-span share reported in Section~\ref{sec:procrustes} is $0.9\%$
and $1.2\%$). Held-out non-English AUROC moves from $0.8274$ to $0.8277$ on Llama-3B and
from $0.8657$ to $0.8649$ on Gemma-9B, a difference of $+0.0003$ and $-0.0008$, far smaller
than the method's own gain on these models ($+0.087$ and $+0.028$ over the untouched
baseline). The canonical solution removes every reflection and stays far closer to the
identity (Table~\ref{tab:diagnostics}); Table~\ref{tab:perlayer} gives every fitted layer.

An indirect check points the same way. A Haar-random rotation reaches $0.658$ against the
method's $0.853$ and the untouched baseline's $0.820$. If the arbitrary component of
$\mathbf{R}$ were driving the result, the fitted rotation would sit near the random one.

\begin{table}[ht]
\centering
\caption{Diagnostics for the fitted transform over the 21 selected layers (7 languages
$\times$ 3 layers). $r$ is the mean numerical rank of $\mathbf{M}$ (singular values above
$10^{-6}\sigma_{\max}$); $\det\mathbf{R}=-1$ counts reflections; where two values are given
they are for the implementation-default / canonical solution. $\sqrt{2d}$ is the value
$\lVert\mathbf{R}-\mathbf{I}\rVert_F$ takes for a Haar-random rotation, shown for reference.}
\label{tab:diagnostics}
\small
\begin{tabular}{l cccccc}
\toprule
Model & $r$ & $\det\mathbf{R}=-1$ & $\lVert\mathbf{R}-\mathbf{I}\rVert_F$ & $\sqrt{2d}$ & in-span share & non-EN AUROC \\
\midrule
Llama-3B   & 737 & 12/21 / 0/21 & 76.5 / 38.7 & 78.4 & 99.1\% & 0.8274 / 0.8277 \\
Gemma-9B   & 745 & 5/21 / 0/21  & 82.1 / 36.6 & 84.7 & 98.8\% & 0.8657 / 0.8649 \\
\bottomrule
\end{tabular}
\end{table}

\begin{table}[ht]
\centering
\caption{Per-layer diagnostics for the two refitted models. $\det\mathbf{R}$ and
$\lVert\mathbf{R}-\mathbf{I}\rVert_F$ are given for the implementation-default / canonical
solution; in-span is the share of a centred held-out activation's squared norm inside the span
of the fitting activations, averaged over held-out claims.}
\label{tab:perlayer}
\scriptsize
\setlength{\tabcolsep}{4pt}
\begin{tabular}{l c c c c c c c}
\toprule
Language & Layer & rank$(\mathbf{M})$ & $\sigma_{\max}$ & $\det\mathbf{R}$ & $\lVert\mathbf{R}-\mathbf{I}\rVert_F$ & $\lVert\bm{\mu}^{\mathrm{en}}-\bm{\mu}^{\ell}\rVert$ & in-span \\
\midrule
\multicolumn{8}{l}{\textit{Llama-3B} ($d = 3{,}072$, $\sqrt{2d} = 78.4$)} \\
PT & 14 & 738 & $1.4{\times}10^{3}$ & $-1$ / $+1$ & 77.2 / 41.1 & 6.02 & 99.3\% \\
PT & 20 & 717 & $3.6{\times}10^{3}$ & $-1$ / $+1$ & 75.3 / 34.0 & 4.15 & 99.4\% \\
PT & 24 & 721 & $1.4{\times}10^{4}$ & $-1$ / $+1$ & 73.9 / 30.4 & 6.14 & 98.9\% \\
PL & 14 & 741 & $1.7{\times}10^{3}$ & $-1$ / $+1$ & 77.5 / 41.7 & 5.99 & 99.4\% \\
PL & 22 & 736 & $5.5{\times}10^{3}$ & $+1$ / $+1$ & 75.2 / 35.8 & 7.12 & 99.1\% \\
PL & 26 & 733 & $2.2{\times}10^{4}$ & $-1$ / $+1$ & 74.2 / 31.2 & 7.76 & 98.9\% \\
DE & 20 & 710 & $3.9{\times}10^{3}$ & $-1$ / $+1$ & 76.4 / 38.9 & 7.27 & 99.6\% \\
DE & 6 & 746 & $3.3{\times}10^{1}$ & $+1$ / $+1$ & 76.6 / 39.1 & 2.67 & 99.0\% \\
DE & 22 & 728 & $8.1{\times}10^{3}$ & $+1$ / $+1$ & 73.7 / 28.9 & 4.42 & 99.2\% \\
AR & 16 & 734 & $1.5{\times}10^{3}$ & $-1$ / $+1$ & 78.1 / 44.3 & 8.82 & 99.5\% \\
AR & 20 & 731 & $2.5{\times}10^{3}$ & $+1$ / $+1$ & 74.6 / 33.6 & 4.20 & 99.4\% \\
AR & 14 & 743 & $1.1{\times}10^{3}$ & $+1$ / $+1$ & 78.1 / 43.9 & 7.24 & 99.3\% \\
ZH & 20 & 743 & $2.9{\times}10^{3}$ & $-1$ / $+1$ & 78.0 / 44.4 & 10.03 & 99.3\% \\
ZH & 22 & 746 & $4.3{\times}10^{3}$ & $+1$ / $+1$ & 74.9 / 32.6 & 3.79 & 98.6\% \\
ZH & 12 & 746 & $3.6{\times}10^{2}$ & $-1$ / $+1$ & 78.1 / 43.5 & 5.54 & 98.7\% \\
RU & 14 & 744 & $1.8{\times}10^{3}$ & $+1$ / $+1$ & 77.7 / 41.7 & 4.85 & 99.3\% \\
RU & 20 & 730 & $3.9{\times}10^{3}$ & $-1$ / $+1$ & 75.2 / 35.2 & 4.06 & 99.5\% \\
RU & 12 & 746 & $3.4{\times}10^{2}$ & $-1$ / $+1$ & 77.9 / 42.4 & 4.36 & 98.7\% \\
TR & 14 & 746 & $1.2{\times}10^{3}$ & $+1$ / $+1$ & 77.8 / 43.1 & 5.83 & 99.3\% \\
TR & 12 & 746 & $2.9{\times}10^{2}$ & $-1$ / $+1$ & 77.9 / 43.2 & 5.05 & 98.9\% \\
TR & 6 & 746 & $4.4{\times}10^{1}$ & $+1$ / $+1$ & 77.5 / 42.9 & 3.17 & 98.9\% \\
\midrule
\multicolumn{8}{l}{\textit{Gemma-9B} ($d = 3{,}584$, $\sqrt{2d} = 84.7$)} \\
PT & 28 & 742 & $5.8{\times}10^{6}$ & $+1$ / $+1$ & 81.5 / 34.5 & 63.60 & 99.2\% \\
PT & 22 & 747 & $9.2{\times}10^{5}$ & $+1$ / $+1$ & 81.6 / 34.1 & 36.28 & 98.7\% \\
PT & 26 & 745 & $4.0{\times}10^{6}$ & $-1$ / $+1$ & 80.7 / 31.4 & 37.74 & 98.9\% \\
PL & 26 & 745 & $3.3{\times}10^{6}$ & $+1$ / $+1$ & 82.3 / 37.2 & 87.43 & 99.0\% \\
PL & 24 & 747 & $1.5{\times}10^{6}$ & $+1$ / $+1$ & 82.4 / 37.1 & 63.84 & 98.8\% \\
PL & 22 & 747 & $9.6{\times}10^{5}$ & $+1$ / $+1$ & 82.2 / 36.9 & 53.62 & 98.7\% \\
DE & 24 & 746 & $1.6{\times}10^{6}$ & $+1$ / $+1$ & 81.3 / 33.7 & 56.92 & 99.0\% \\
DE & 22 & 748 & $9.7{\times}10^{5}$ & $-1$ / $+1$ & 81.4 / 33.6 & 46.75 & 98.9\% \\
DE & 28 & 732 & $6.2{\times}10^{6}$ & $-1$ / $+1$ & 80.9 / 31.9 & 46.13 & 99.4\% \\
AR & 22 & 746 & $9.2{\times}10^{5}$ & $+1$ / $+1$ & 83.0 / 38.7 & 65.23 & 98.6\% \\
AR & 26 & 745 & $3.4{\times}10^{6}$ & $+1$ / $+1$ & 81.9 / 35.5 & 63.67 & 98.9\% \\
AR & 24 & 746 & $1.5{\times}10^{6}$ & $+1$ / $+1$ & 80.9 / 31.8 & 36.51 & 98.7\% \\
ZH & 38 & 746 & $1.4{\times}10^{7}$ & $+1$ / $+1$ & 83.0 / 41.3 & 285.07 & 98.6\% \\
ZH & 22 & 748 & $7.5{\times}10^{5}$ & $+1$ / $+1$ & 83.1 / 40.6 & 76.16 & 98.3\% \\
ZH & 36 & 745 & $1.0{\times}10^{7}$ & $+1$ / $+1$ & 82.8 / 40.6 & 141.17 & 98.5\% \\
RU & 24 & 747 & $1.6{\times}10^{6}$ & $+1$ / $+1$ & 82.7 / 37.8 & 60.57 & 98.8\% \\
RU & 22 & 748 & $9.5{\times}10^{5}$ & $-1$ / $+1$ & 82.7 / 37.8 & 51.47 & 98.7\% \\
RU & 26 & 745 & $3.6{\times}10^{6}$ & $+1$ / $+1$ & 81.5 / 35.0 & 53.59 & 99.0\% \\
TR & 28 & 738 & $5.2{\times}10^{6}$ & $-1$ / $+1$ & 83.0 / 39.6 & 104.04 & 99.2\% \\
TR & 26 & 745 & $3.6{\times}10^{6}$ & $+1$ / $+1$ & 83.1 / 39.6 & 101.91 & 99.0\% \\
TR & 24 & 747 & $1.5{\times}10^{6}$ & $+1$ / $+1$ & 83.1 / 39.2 & 77.99 & 98.8\% \\
\bottomrule
\end{tabular}
\end{table}

\section{Why an Unconstrained Map Discards Information}
\label{app:ridge}

\begin{proposition}[Out-of-span annihilation]
\label{prop:annihilate}
Let $\mathbf{W} = (\mathbf{A}^\top\mathbf{A}+\lambda\mathbf{I})^{-1}\mathbf{A}^\top\mathbf{B}$
be the ridge map used in our unconstrained-map control, with $\lambda \ge 0$. For any
activation $\mathbf{h}$ orthogonal to every row of $\mathbf{A}$ we have
$\mathbf{h}\mathbf{W} = \mathbf{0}$. Since $n < d$, the space of such $\mathbf{h}$ has
dimension at least $d - n$, so $\mathbf{W}$ annihilates a subspace of that dimension. The
orthogonal $\mathbf{R}$ is invertible and annihilates nothing.
\end{proposition}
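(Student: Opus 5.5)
The plan is a direct linear-algebra argument built on one observation: the orthogonal complement of the row space of $\mathbf{A}$ is an invariant subspace of $\mathbf{A}^\top\mathbf{A}+\lambda\mathbf{I}$. Throughout, activations are row vectors, as in Eq.~\ref{eq:rosh}. Write $\mathcal{N}=\{\mathbf{h}\in\mathbb{R}^{1\times d}:\mathbf{h}\mathbf{A}^\top=\mathbf{0}\}$ for the set of activations orthogonal to every row of $\mathbf{A}$.

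\emph{Case $\lambda>0$.} The matrix $\mathbf{S}=\mathbf{A}^\top\mathbf{A}+\lambda\mathbf{I}$ is symmetric positive definite, so it is invertible. For $\mathbf{h}\in\mathcal{N}$ we have $\mathbf{h}\mathbf{S}=\mathbf{h}\mathbf{A}^\top\mathbf{A}+\lambda\mathbf{h}=\lambda\mathbf{h}$. Multiplying on the right by $\mathbf{S}^{-1}$ gives $\mathbf{h}\mathbf{S}^{-1}=\lambda^{-1}\mathbf{h}$. Then
\[
\mathbf{h}\mathbf{W}=\lambda^{-1}\,\mathbf{h}\mathbf{A}^\top\mathbf{B}=\mathbf{0}.
\]

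\emph{Case $\lambda=0$.} This is the main obstacle, because for $n<d$ the matrix $\mathbf{A}^\top\mathbf{A}$ is singular and the formula in the statement is not literally defined. I would make the convention explicit: $\mathbf{W}$ is the minimum-norm least-squares map $\mathbf{A}^{+}\mathbf{B}$, which is what a least-squares solver returns and is also the limit of the ridge map as $\lambda\to0^{+}$. Using the identity $\mathbf{A}^{+}=\mathbf{A}^\top(\mathbf{A}\mathbf{A}^\top)^{+}$, we get $\mathbf{h}\mathbf{A}^{+}=\mathbf{h}\mathbf{A}^\top(\mathbf{A}\mathbf{A}^\top)^{+}=\mathbf{0}$ for every $\mathbf{h}\in\mathcal{N}$. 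Alternatively, take the limit of the $\lambda>0$ case, since $\mathbf{h}\mathbf{W}_\lambda=\mathbf{0}$ for every $\lambda>0$.

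\emph{Dimension count.} $\mathcal{N}$ is the orthogonal complement of the row space of $\mathbf{A}$, whose dimension is $\operatorname{rank}\mathbf{A}\le n$. Hence $\dim\mathcal{N}\ge d-n$. Centring in Eq.~\ref{eq:centred} gives $\operatorname{rank}\mathbf{A}\le n-1$, so in fact $\dim\mathcal{N}\ge d-n+1$, consistent with $r\le n-1$ in Proposition~\ref{prop:family}.

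\emph{Contrast with the orthogonal map.} If $\mathbf{h}\mathbf{R}=\mathbf{0}$ with $\mathbf{R}^\top\mathbf{R}=\mathbf{I}$, then $\mathbf{h}=\mathbf{h}\mathbf{R}\mathbf{R}^\top=\mathbf{0}$. So $\mathbf{R}$ has trivial kernel and annihilates nothing. Moreover $\lVert\mathbf{h}\mathbf{R}\rVert=\lVert\mathbf{h}\rVert$, so every out-of-span component of a held-out activation is carried through the map rather than set to zero.
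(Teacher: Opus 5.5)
Your proof is correct, and it differs from the paper's only in the step that kills $\mathbf{h}\mathbf{W}$. The paper uses the push-through identity to write $\mathbf{W}=\mathbf{A}^\top\mathbf{C}$ with $\mathbf{C}=(\mathbf{A}\mathbf{A}^\top+\lambda\mathbf{I})^{-1}\mathbf{B}$, so that $\mathbf{h}\mathbf{W}=(\mathbf{h}\mathbf{A}^\top)\mathbf{C}=\mathbf{0}$ at once. You instead note that the orthogonal complement of the row space of $\mathbf{A}$ is the $\lambda$-eigenspace of $\mathbf{A}^\top\mathbf{A}+\lambda\mathbf{I}$, so the inverse acts there as $\lambda^{-1}$ and the trailing $\mathbf{A}^\top$ does the annihilating.

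Your version is self-contained, whereas the paper invokes the push-through identity without justification. But it divides by $\lambda$, which is why you need a separate case. The factorisation covers, in one line, every $\lambda$ for which $\mathbf{A}\mathbf{A}^\top+\lambda\mathbf{I}$ is invertible.

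That caveat matters, and your handling of $\lambda=0$ is more careful than the paper's. Because the rows of $\mathbf{A}$ are centred, $\operatorname{rank}\mathbf{A}\le n-1$. So $\mathbf{A}\mathbf{A}^\top$ is singular just like $\mathbf{A}^\top\mathbf{A}$, and the paper's $\mathbf{C}$ is undefined at $\lambda=0$. Your identity $\mathbf{A}^{+}=\mathbf{A}^\top(\mathbf{A}\mathbf{A}^\top)^{+}$ is exactly the form of the paper's factorisation that survives there, as is your limit $\lambda\to0^{+}$. The control itself uses $\lambda>0$, so none of the reported results are affected.

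The dimension count and the trivial kernel of $\mathbf{R}$ match the paper's argument, and your sharpening to $d-n+1$ is correct.
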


\begin{proof}
Write $\mathbf{W} = \mathbf{A}^\top\mathbf{C}$ for
$\mathbf{C} = (\mathbf{A}\mathbf{A}^\top+\lambda\mathbf{I})^{-1}\mathbf{B}$. If
$\mathbf{h}$ is orthogonal to the rows of $\mathbf{A}$ then $\mathbf{h}\mathbf{A}^\top=\mathbf{0}$,
hence $\mathbf{h}\mathbf{W}=\mathbf{0}$. The set of such $\mathbf{h}$ is the orthogonal
complement of a space of dimension at most $n$. Orthogonality of $\mathbf{R}$ gives
$\mathbf{R}^{-1}=\mathbf{R}^\top$.
\end{proof}

A held-out activation therefore loses whatever fraction of itself lies outside the span of
the 750 fitting activations before the read-out head ever sees it. In our data that fraction
is small (at most $1.7\%$, Appendix~\ref{app:nonunique}), so the deletion alone does not
explain the unconstrained-map arm scoring $0.786$, below the untouched baseline of $0.820$;
the ridge map also rescales the directions inside the span, which a rotation never does. The
point of the proposition is structural: an unconstrained map can destroy information, and a
rotation cannot.

\paragraph{A two-claim illustration.}
Take $n = 2$ and $d = 3$, with the read-out reading the first coordinate. Let the centred
non-English and English activations be
\begin{equation}
  \mathbf{A} = \begin{pmatrix} -1 & 2 & 0 \\ -1 & -2 & 0 \end{pmatrix},
  \qquad
  \mathbf{B} = \begin{pmatrix} 2 & 1 & 0 \\ -2 & 1 & 0 \end{pmatrix}.
\end{equation}
The least-squares map is
\begin{equation}
  \mathbf{W}^\star = \mathbf{A}^{+}\mathbf{B}
  = \begin{pmatrix} 0 & -1 & 0 \\ 1 & 0 & 0 \\ 0 & 0 & 0 \end{pmatrix},
\end{equation}
which maps both fitting claims onto their partners exactly. Its third row and third column
are zero: every component a held-out claim carries along the third axis is deleted. Its
singular values are $(1,1,0)$. The orthogonal solution for the same data is
\begin{equation}
  \mathbf{R} = \begin{pmatrix} 0 & -1 & 0 \\ 1 & 0 & 0 \\ 0 & 0 & 1 \end{pmatrix},
\end{equation}
with singular values $(1,1,1)$: the same action on the two fitted directions, and the
identity on the direction the data says nothing about.

\section{What the Transform Can and Cannot Do}
\label{app:information}

\begin{proposition}[Information is preserved exactly]
\label{prop:info}
Let $Y$ be the truth value of a claim and $H$ the layer-$l$ activation. The \textup{\textsc{RoSh}}
map $T(\mathbf{h}) = (\mathbf{h}-\bm{\mu}^{\ell})\mathbf{R}+\bm{\mu}^{\mathrm{en}}$ is a
bijection with inverse
$T^{-1}(\mathbf{z}) = (\mathbf{z}-\bm{\mu}^{\mathrm{en}})\mathbf{R}^\top+\bm{\mu}^{\ell}$,
so $I(Y;T(H)) = I(Y;H)$ and the Bayes error of predicting $Y$ from $T(H)$ equals that of
predicting $Y$ from $H$.
\end{proposition}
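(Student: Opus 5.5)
The plan has three steps: show that $T$ is a bijection with the stated inverse, deduce equality of mutual information from invariance under a measurable bijection, and deduce equality of Bayes error by transporting classifiers through $T$ and $T^{-1}$.

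\emph{Step 1: bijectivity.} Since $\mathbf{R}$ is orthogonal, $\mathbf{R}\mathbf{R}^\top=\mathbf{R}^\top\mathbf{R}=\mathbf{I}$. Composing the two maps gives
$T^{-1}(T(\mathbf{h}))=(\mathbf{h}-\bm{\mu}^{\ell})\mathbf{R}\mathbf{R}^\top+\bm{\mu}^{\ell}=\mathbf{h}$,
and symmetrically $T(T^{-1}(\mathbf{z}))=\mathbf{z}$. Both maps are affine, hence continuous and Borel measurable. So $T$ is a homeomorphism of $\mathbb{R}^d$, and in addition an isometry, since $\lVert T(\mathbf{h})-T(\mathbf{h}')\rVert=\lVert(\mathbf{h}-\mathbf{h}')\mathbf{R}\rVert=\lVert\mathbf{h}-\mathbf{h}'\rVert$.

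\emph{Step 2: mutual information.} I would avoid densities, because $H$ need not have one; indeed, in the finite-sample setting it is discrete. Instead I would argue through $\sigma$-algebras. Because $T$ and $T^{-1}$ are both measurable, $\sigma(T(H))=\sigma(H)$. Mutual information $I(Y;Z)$ depends only on the joint law of $Y$ and the $\sigma$-algebra generated by $Z$: for example, it is the supremum over finite measurable partitions of the range, and these correspond bijectively under $T$. Hence $I(Y;T(H))=I(Y;H)$.

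Alternatively, I would apply the data-processing inequality twice:
$I(Y;T(H))\le I(Y;H)$ because $T(H)$ is a function of $H$, and
$I(Y;H)=I(Y;T^{-1}(T(H)))\le I(Y;T(H))$.
The two inequalities give equality. I would present this second route as the main argument, since it is one line and needs no measure-theoretic care beyond the measurability of $T$ and $T^{-1}$.

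\emph{Step 3: Bayes error.} The Bayes error is $\inf_g P(g(Z)\ne Y)$ over measurable classifiers $g$. Any classifier $g$ on $T(H)$ induces the classifier $g\circ T$ on $H$, with identical error. Conversely, any classifier $f$ on $H$ induces $f\circ T^{-1}$ on $T(H)$, again with identical error. The two sets of achievable errors therefore coincide, and so do their infima. Equivalently, $P(Y\mid T(H))=P(Y\mid H)$ almost surely, because the conditioning $\sigma$-algebras are equal.

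I expect no real obstacle here. The only point needing care is to state that invariance requires measurability of $T^{-1}$, not merely injectivity of $T$; the affine form settles this. It is also worth remarking that the statement concerns the layer-$l$ activation alone. It does not claim that the downstream network's read-out preserves information, which is exactly why an improvement in score is interpretable as a change of frame rather than a gain in information.
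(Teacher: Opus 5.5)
Your proof is correct. The first argument in your Step 2, that $\sigma(T(H))=\sigma(H)$ and mutual information depends only on the generated $\sigma$-algebra, is the paper's entire proof, which is a single sentence.

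The argument you promote to the main one uses a genuinely different key lemma. You apply the data-processing inequality in both directions, once through $T$ and once through $T^{-1}$. This replaces the measure-theoretic fact that mutual information is a function of the generated $\sigma$-algebra. The paper invokes the data-processing inequality only in the remark right after the proposition, to argue that no map can add information.

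Each route has an advantage. Yours is self-contained for a reader who takes the data-processing inequality as known. The paper's $\sigma$-algebra route is more economical, because it handles both conclusions at once: the Bayes error $\mathbb{E}\bigl[\min_y P(Y=y\mid H)\bigr]$ is also determined by $\sigma(H)$.

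Your write-up is also more complete than the paper's. The paper does not verify that $T^{-1}$ is the inverse, and it gives no argument at all for the Bayes-error claim. Your Step 1 checks both compositions using $\mathbf{R}\mathbf{R}^\top=\mathbf{R}^\top\mathbf{R}=\mathbf{I}$. Your Step 3 transports classifiers through $T$ and $T^{-1}$, which is the elementary form of the same $\sigma$-algebra observation.
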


\begin{proof}
A bijective bi-measurable map induces $\sigma(T(H))=\sigma(H)$, and mutual information
depends on its arguments only through the generated $\sigma$-algebras.
\end{proof}

\begin{remark}[No map adds information]
\label{rem:dpi}
By the data-processing inequality, $I(Y;f(H)) \le I(Y;H)$ for any fixed $f$, so no
intervention of this kind, ours or any baseline's, can create information about a claim's
truth. The distinction is therefore not that others could inject knowledge. It is that a
non-invertible map can \emph{remove} it, while ours provably cannot.
\end{remark}

\begin{remark}[The gain cannot be denoising]
\label{rem:denoise}
This distinction has a consequence that is easy to miss. A map that deletes directions can
improve a fixed downstream reader simply by removing clutter, which is denoising rather
than re-orientation, and a score improvement from such a map mixes the two effects. Because
\textup{\textsc{RoSh}} deletes nothing, any improvement it produces cannot be denoising. It can only
be a change of orientation, which is why we read the gain as evidence about where the
model's knowledge was pointing rather than about what was removed.
\end{remark}

\begin{proposition}[English is preserved exactly]
\label{prop:english}
Fitting the transform for English against itself gives $\mathbf{A}=\mathbf{B}$, so
$\mathbf{M}=\mathbf{A}^\top\mathbf{A}$ is symmetric positive semidefinite and its left and
right singular vectors coincide on the $r$ directions the data determine. Every optimal
$\mathbf{R}$ therefore acts as the identity on those directions, and the canonical solution
\eqref{eq:canonical} is exactly $\mathbf{R}=\mathbf{I}$. The shift is
$\bm{\mu}^{\mathrm{en}}-\bm{\mu}^{\mathrm{en}}=\mathbf{0}$, so $T$ is the identity and the
English output is unchanged.
\end{proposition}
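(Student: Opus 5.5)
The plan is to verify the three claims of Proposition~\ref{prop:english} in order: the structure of $\mathbf{M}$, the action of every optimal $\mathbf{R}$, and the canonical choice. First, with $\mathbf{A}=\mathbf{B}$ we have $\mathbf{M}=\mathbf{A}^\top\mathbf{A}$, which is symmetric positive semidefinite. Its eigendecomposition $\mathbf{M}=\mathbf{Q}\bm{\Lambda}\mathbf{Q}^\top$ is therefore a valid SVD with $\mathbf{U}=\mathbf{V}=\mathbf{Q}$. On the $r$ nonzero singular values, the singular subspaces are unique up to a rotation within each repeated eigenvalue block, and any such rotation is applied identically to $\mathbf{U}_r$ and $\mathbf{V}_r$. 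Hence $\mathbf{U}_r\mathbf{V}_r^\top$ is the orthogonal projector $\mathbf{P}$ onto $\operatorname{range}(\mathbf{A}^\top)$, independent of the basis chosen.

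Second, I invoke Proposition~\ref{prop:family}. Every minimiser has the form $\mathbf{R}_{\mathbf{W}}=\mathbf{P}+\mathbf{U}_0\mathbf{W}\mathbf{V}_0^\top$. For $\mathbf{x}$ in the span of $\mathbf{U}_r$, the term $\mathbf{x}\mathbf{U}_0$ vanishes, so $\mathbf{x}\mathbf{R}_{\mathbf{W}}=\mathbf{x}\mathbf{P}=\mathbf{x}$. This gives ``identity on the determined directions'' for every optimal $\mathbf{R}$. As a sanity check, the objective value is $0$, attained since $\mathbf{A}\mathbf{P}=\mathbf{A}$.

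Third, for the canonical solution \eqref{eq:canonical}, I minimise $\lVert\mathbf{R}_{\mathbf{W}}-\mathbf{I}\rVert_F^2=2d-2\operatorname{tr}(\mathbf{R}_{\mathbf{W}})$. Here $\operatorname{tr}(\mathbf{R}_{\mathbf{W}})=r+\operatorname{tr}(\mathbf{W}\mathbf{V}_0^\top\mathbf{U}_0)$. The key point is that we may (and the canonical construction should) take $\mathbf{U}_0=\mathbf{V}_0$, since both are orthonormal bases of the same null space of the symmetric $\mathbf{M}$. Even if an implementation returns different bases, $\mathbf{V}_0^\top\mathbf{U}_0$ is an orthogonal matrix $\mathbf{G}$. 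Reparametrising $\mathbf{W}'=\mathbf{W}\mathbf{G}$ shows the family is the same set, so the maximiser of $\operatorname{tr}(\mathbf{W}\mathbf{G})$ over $\mathcal{SO}(d-r)$ is attained uniquely at $\mathbf{W}\mathbf{G}=\mathbf{I}$, with trace $d-r$, the maximum possible.

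I expect this step to be the main obstacle: the determinant constraint $\det\mathbf{R}=+1$ must be shown not to bind. I would argue that $\mathbf{U}_0\mathbf{W}\mathbf{V}_0^\top=\mathbf{U}_0\mathbf{W}\mathbf{G}\mathbf{U}_0^\top$, so choosing $\mathbf{W}=\mathbf{G}^\top$ makes this block the projector $\mathbf{I}-\mathbf{P}$. Then $\mathbf{R}=\mathbf{P}+(\mathbf{I}-\mathbf{P})=\mathbf{I}$, which has determinant $+1$ and is feasible. Because it attains the global upper bound $\operatorname{tr}=d$, it is the unique minimiser even without the $\mathcal{SO}$ restriction. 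Finally, the shift is $\bm{\mu}^{\mathrm{en}}-\bm{\mu}^{\mathrm{en}}=\mathbf{0}$, so $T(\mathbf{h})=(\mathbf{h}-\bm{\mu}^{\mathrm{en}})\mathbf{I}+\bm{\mu}^{\mathrm{en}}=\mathbf{h}$ at every selected layer. The forward pass, and hence $s(x)$ in Eq.~\ref{eq:score}, is then unchanged. I would add a remark that for a non-canonical optimal $\mathbf{R}$ only the out-of-span component of a held-out English activation can move, which by Appendix~\ref{app:nonunique} carries under $2\%$ of its norm.
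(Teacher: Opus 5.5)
Your proposal is correct and follows the argument the paper packs into the proposition itself: $\mathbf{A}=\mathbf{B}$ makes $\mathbf{M}$ symmetric positive semidefinite, so $\mathbf{U}_r\mathbf{V}_r^\top$ is the projector onto the data span, every optimal $\mathbf{R}$ is the identity there, the canonical solution is $\mathbf{I}$, and the shift vanishes; you supply the details the paper leaves implicit. One small point: your sentence that the maximiser over $\mathcal{SO}(d-r)$ is attained at $\mathbf{W}\mathbf{G}=\mathbf{I}$ needs $\det\mathbf{G}=+1$, but it is not needed, because your next step settles the canonical choice on its own: $\mathbf{R}=\mathbf{I}$ lies in the optimal family, has $\det\mathbf{R}=+1$, and attains the global bound $\operatorname{tr}\mathbf{R}=d$.
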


This is the exact null reported in Table~\ref{tab:arms}, where fitting English to itself
moves the score by $0.000$. A weight update shared across languages offers
no such guarantee, since it generally changes the function on English inputs as well.

\section{On Measuring the Angle Between the Truth Direction and the Read-Out}
\label{app:angle}

An earlier version of this work described the non-English failure as the truth direction
sitting close to orthogonal to the direction the output head reads. Direct measurement does
not support that description, and we report the reason here.

On a separate 570-claim diagnostic set, the angle between the final-layer truth direction
and the read-out direction is close to $90^\circ$ in every language, English included. In
$d$ dimensions the cosine between two arbitrary directions concentrates near zero with
standard deviation approximately $1/\sqrt{d}$, which is $0.016$ at $d = 4096$, or
$89.1^\circ$. An angle of about $90^\circ$ is therefore the value this measurement takes
when it carries no information, and it cannot distinguish a language where the read-out
succeeds from one where it fails.

The measurement is also not the quantity that governs separation. For a read-out direction
$\mathbf{u}$ and a class-mean difference $\mathbf{t}$, the separation of the score is
\begin{equation}
  \frac{\mathbf{u}^\top\mathbf{t}}{\sigma_{\mathbf{u}}},
  \label{eq:effectsize}
\end{equation}
where $\sigma_{\mathbf{u}}$ is the within-class spread of the score. A cosine of $0.05$,
which is $87^\circ$, yields excellent separation when $\lVert\mathbf{t}\rVert$ is large
relative to $\sigma_{\mathbf{u}}$, and none when it is not. Two languages can share an
angle and differ entirely in \eqref{eq:effectsize}.

We therefore report the comparison that does govern the effect, per language: the AUROC a
linear probe attains on the layer-$l$ activation, against the AUROC the model's own
read-out attains on the same claims. Table~\ref{tab:probe} gives both. The probe stays high
in every language shown while the read-out falls further short of it outside English; English
itself leaves a gap, which Section~\ref{sec:decomposition} separates from the
language-specific part.

Table~\ref{tab:probeall} gives the same comparison for all eight models, with English
alongside, and is the basis of Section~\ref{sec:decomposition}.

\begin{table}[ht]
\centering
\caption{Read-out against a linear probe, held out, for all eight models. Read-out and
\textsc{RoSh} are from Table~\ref{tab:main}. The probe is an L2-regularised logistic
regression on the fit half, scored on the held-out half, at the best candidate layer; the
non-English probe averages the seven languages. Gap is probe minus read-out. Closed is
$(\textsc{RoSh} - \text{baseline}) / \text{gap before}$; the Mean row's share is computed
from the mean gap.}
\label{tab:probeall}
\small
\setlength{\tabcolsep}{4pt}
\resizebox{\textwidth}{!}{%
\begin{tabular}{l ccc ccc cc c}
\toprule
& \multicolumn{3}{c}{English} & \multicolumn{3}{c}{non-English} & \multicolumn{2}{c}{non-EN gap} & \\
\cmidrule(lr){2-4}\cmidrule(lr){5-7}\cmidrule(lr){8-9}
Model & read-out & probe & gap & baseline & \textsc{RoSh} & probe & before & after & closed \\
\midrule
Llama-3B         & 0.857 & 0.923 & $+0.066$ & 0.740 & 0.829 & 0.893 & $+0.153$ & $+0.064$ & 58\% \\
Mistral-7B       & 0.834 & 0.927 & $+0.093$ & 0.704 & 0.809 & 0.857 & $+0.153$ & $+0.048$ & 69\% \\
Gemma-9B         & 0.878 & 0.941 & $+0.063$ & 0.838 & 0.865 & 0.928 & $+0.090$ & $+0.063$ & 30\% \\
Mistral-24B      & 0.888 & 0.939 & $+0.051$ & 0.866 & 0.878 & 0.931 & $+0.065$ & $+0.053$ & 18\% \\
Gemma-27B        & 0.860 & 0.928 & $+0.068$ & 0.833 & 0.844 & 0.911 & $+0.078$ & $+0.067$ & 14\% \\
Qwen-32B         & 0.885 & 0.935 & $+0.050$ & 0.879 & 0.889 & 0.931 & $+0.052$ & $+0.042$ & 19\% \\
Llama-70B        & 0.879 & 0.948 & $+0.069$ & 0.865 & 0.883 & 0.935 & $+0.070$ & $+0.052$ & 26\% \\
DeepSeek-67B     & 0.875 & 0.942 & $+0.067$ & 0.861 & 0.872 & 0.891 & $+0.030$ & $+0.019$ & 37\% \\
\midrule
Mean             & 0.870 & 0.935 & $+0.066$ & 0.823 & 0.859 & 0.910 & $+0.086$ & $+0.051$ & 41\% \\
\bottomrule
\end{tabular}}
\end{table}

\section{A Worked Example of \textsc{RoSh} in Three Dimensions}
\label{app:toy}

The mechanism is easiest to see at $d = 3$, where the read-out reads the first coordinate,
so the score of an activation $\mathbf{h}$ is $s(\mathbf{h}) = h_1$.

\paragraph{Setup.}
Four claims, two true and two false. In English,
\begin{equation}
  \mathbf{h}^{\mathrm{en}}:\quad
  (2,1,0),\;(2,-1,0)\;\text{[true]},\qquad
  (-2,1,0),\;(-2,-1,0)\;\text{[false]},
\end{equation}
so the score is $+2$ for both true claims and $-2$ for both false claims, and the read-out
separates them perfectly. In Arabic the same four claims give
\begin{equation}
  \mathbf{h}^{\mathrm{ar}}:\quad
  (4,7,0),\;(6,7,0)\;\text{[true]},\qquad
  (4,3,0),\;(6,3,0)\;\text{[false]}.
\end{equation}
The scores are now $\{4,6\}$ for the true claims and $\{4,6\}$ for the false ones, so the
read-out is at chance. The separation has not disappeared: it has moved to the second
coordinate, where true claims read $7$ and false claims read $3$. A probe along the second
axis recovers the truth perfectly while the read-out recovers nothing, which is the
situation of Appendix~\ref{app:angle} in miniature.

\paragraph{Fitting.}
The centroids are $\bm{\mu}^{\mathrm{en}} = (0,0,0)$ and $\bm{\mu}^{\mathrm{ar}} = (5,5,0)$.
Centring gives
\begin{equation}
  \mathbf{A} = \begin{pmatrix} -1 & 2 & 0\\ 1 & 2 & 0\\ -1 & -2 & 0\\ 1 & -2 & 0\end{pmatrix},
  \qquad
  \mathbf{B} = \begin{pmatrix} 2 & 1 & 0\\ 2 & -1 & 0\\ -2 & 1 & 0\\ -2 & -1 & 0\end{pmatrix},
  \qquad
  \mathbf{M} = \mathbf{A}^\top\mathbf{B}
  = \begin{pmatrix} 0 & -4 & 0\\ 16 & 0 & 0\\ 0 & 0 & 0\end{pmatrix}.
\end{equation}
The singular values of $\mathbf{M}$ are $16$, $4$ and $0$; the zero reflects that all four
claims lie in the plane $h_3 = 0$, so the data constrains only two of the three directions.
The solution is
\begin{equation}
  \mathbf{R} = \mathbf{U}\mathbf{V}^\top
  = \begin{pmatrix} 0 & -1 & 0\\ 1 & 0 & 0\\ 0 & 0 & 1\end{pmatrix},
  \qquad \det\mathbf{R} = +1,
\end{equation}
a rotation of $-90^\circ$ in the $(1,2)$ plane.

\paragraph{Applying it.}
For the first Arabic claim,
\begin{equation}
  (4,7,0) - \bm{\mu}^{\mathrm{ar}} = (-1,2,0),
  \qquad
  (-1,2,0)\,\mathbf{R} = (2,1,0),
  \qquad
  (2,1,0) + \bm{\mu}^{\mathrm{en}} = (2,1,0),
\end{equation}
which is exactly the English activation for that claim. The same holds for the other three,
so the score returns to $\pm 2$ and the read-out separates the classes again.

\paragraph{Why a shift alone cannot do this.}
The shift adds the single vector $\bm{\mu}^{\mathrm{en}}-\bm{\mu}^{\mathrm{ar}} = (-5,-5,0)$
to every activation. Every point moves by the same amount in the same direction, so the line
joining the true and false groups keeps its direction: a translation cannot change an
orientation. In the real models the shift nonetheless contributes, because the corrected
activation passes through further non-linear layers which do not commute with translation,
and Table~\ref{tab:arms} attributes $43\%$ of the gain to the shift and $74\%$ to the
rotation, the two not being additive.

\paragraph{Why the pairing matters.}
$\mathbf{M} = \mathbf{A}^\top\mathbf{B}$ is a sum of outer products of each Arabic claim
with \emph{its own} English counterpart. Permuting the rows of $\mathbf{B}$ leaves both
clouds and both centroids unchanged but destroys that correspondence, and with it the only
information the rotation uses. This is the scrambled-pairing control, which falls to
$0.621$ in Table~\ref{tab:arms}. A shift computed from the means would be entirely
unaffected by the same permutation.

\section{External Benchmarks}
\label{app:external}

We also run \textsc{RoSh} on the two benchmarks of the closest inference-time method,
latent-space intervention (LSI) \citep{ghorbanpour2026lsi}, using their data and their metric
code, and compare with the numbers they report. On KLAR, averaged over three models, it adds
$+5.63$ accuracy and $+5.48$ agreement with English, against $+0.44$ and $+1.02$ for LSI. On
mParaRel, where LSI reports a single model, Aya, \textsc{RoSh} adds $+2.24$ agreement with
English against $+0.27$ (Table~\ref{tab:external}).

\begin{table}[ht]
\centering
\caption{External benchmarks, in \%. The LSI columns are exactly as reported by
\citet{ghorbanpour2026lsi} (their Tables 1, 4 and 5): their baseline and their variant with the
highest agreement with English, ties broken by accuracy (Qwen3-8B mean-shift, Llama-3.1-8B
AE + mean-shift, Aya-8B AE on KLAR and AE + PCA on mParaRel); on mParaRel they report Aya only,
so we do the same. The \textsc{RoSh} columns are our runs, with accuracy the non-English mean
over Arabic, Dutch, Russian and Chinese. The two methods are evaluated on different splits, so
the gains $\Delta$ are the comparable quantity.}
\label{tab:external}
\small
\setlength{\tabcolsep}{3pt}
\resizebox{\textwidth}{!}{%
\begin{tabular}{l l ccc ccc ccc ccc}
\toprule
& & \multicolumn{6}{c}{accuracy} & \multicolumn{6}{c}{agreement with English} \\
\cmidrule(lr){3-8}\cmidrule(lr){9-14}
& & \multicolumn{3}{c}{LSI (reported)} & \multicolumn{3}{c}{\textsc{RoSh} (ours)} & \multicolumn{3}{c}{LSI (reported)} & \multicolumn{3}{c}{\textsc{RoSh} (ours)} \\
\cmidrule(lr){3-5}\cmidrule(lr){6-8}\cmidrule(lr){9-11}\cmidrule(lr){12-14}
Benchmark & Model & base & LSI & $\Delta$ & base & \textsc{RoSh} & $\Delta$ & base & LSI & $\Delta$ & base & \textsc{RoSh} & $\Delta$ \\
\midrule
KLAR     & Qwen3-8B     & 87.51 & 88.25 & $+0.74$ & 85.53 & 90.17 & $+4.64$ & 86.64 & 87.38 & $+0.74$ & 86.78 & 91.57 & $+4.79$ \\
KLAR     & Llama-3.1-8B & 78.21 & 79.05 & $+0.84$ & 74.06 & 80.21 & $+6.15$ & 76.31 & 78.29 & $+1.98$ & 75.40 & 81.47 & $+6.07$ \\
KLAR     & Aya-8B       & 85.59 & 85.32 & $-0.27$ & 83.55 & 89.64 & $+6.09$ & 85.09 & 85.43 & $+0.34$ & 84.79 & 90.36 & $+5.57$ \\
\cmidrule(lr){2-14}
KLAR     & mean         &       &       & $+0.44$ &       &       & $+5.63$ &       &       & $+1.02$ &       &       & $+5.48$ \\
\midrule
mParaRel & Aya-8B       & 85.91 & 86.37 & $+0.46$ & -- & -- & -- & 86.70 & 86.97 & $+0.27$ & 86.70 & 88.94 & $+2.24$ \\
\bottomrule
\end{tabular}}
\end{table}

\section{Robustness to the Translation System}
\label{app:mt}

The mParaRel templates were produced by five machine-translation systems (Bing, Google,
M2M100, mBART-50 and OPUS-MT) and then reviewed by humans, so on that benchmark the gain could
in principle reflect translation quality rather than model behaviour. We split its Qwen3-8B
test claims by the system that produced each template and report the correction per
translation system, restricted to claims the model answers correctly in English, which fixes English accuracy at $100\%$ in every bucket and removes
the difference in item difficulty that otherwise confounds the comparison. Intervals are
$95\%$ bootstrap over claims. The two smallest buckets are merged.

The table is presented as a check that the correction is not harmful under any translation
source rather than as a measurement of how translation quality drives the effect, since
larger gains under weaker translation are also what greater headroom alone would produce.
Over all systems the correction adds $+1.36$ points ($95\%$ CI $+0.66$ to $+2.10$), while the
scrambled-correspondence control lowers agreement with English by $1.60$ points, and no system
shows a significant loss. Without the English-correct restriction the overall change is
$+1.53$ points ($95\%$ CI $+0.80$ to $+2.30$).

\begin{table}[ht]
\centering
\caption{Correction by translation system on mParaRel (Qwen3-8B, held out), restricted to
claims answered correctly in English. Non-English accuracy in \%; $\Delta$ in points with a
$95\%$ bootstrap interval.}
\label{tab:mt}
\small
\begin{tabular}{l c cc c}
\toprule
MT system & $N$ & non-EN baseline & non-EN \textsc{RoSh} & $\Delta$ [95\% CI] \\
\midrule
Google            & 90  & 94.7 & 95.0 & $+0.28$ [$-0.56$, $+1.11$] \\
Bing              & 279 & 86.0 & 88.5 & $+2.51$ [$+1.16$, $+3.85$] \\
OPUS-MT           & 180 & 92.1 & 91.9 & $-0.14$ [$-1.11$, $+0.83$] \\
mBART50 + m2m100  & 59  & 72.5 & 74.6 & $+2.12$ [$+0.00$, $+4.24$] \\
\midrule
All               & 608 & 87.8 & 89.1 & $+1.36$ [$+0.66$, $+2.10$] \\
\bottomrule
\end{tabular}
\end{table}

\section{Fitting Protocol}
\label{app:protocol}

Figure~\ref{fig:lookup_build} sets out the procedure of Sections~\ref{sec:procrustes} and~\ref{sec:layer_selection} end to end. Everything to the left of the held-out block happens on the 750 fitting claims: activations are captured, candidate layers are swept, each candidate is ranked on an inner fold, the rotation is fitted in closed form, and the sweep repeats with the correction in place. The output is one recipe per language, at most three layers, each with its rotation and its centroid pair. The held-out half enters once, at the end, to produce every number in Section~\ref{sec:results}.

\begin{figure}[!tb]
\centering
\includegraphics[width=\textwidth]{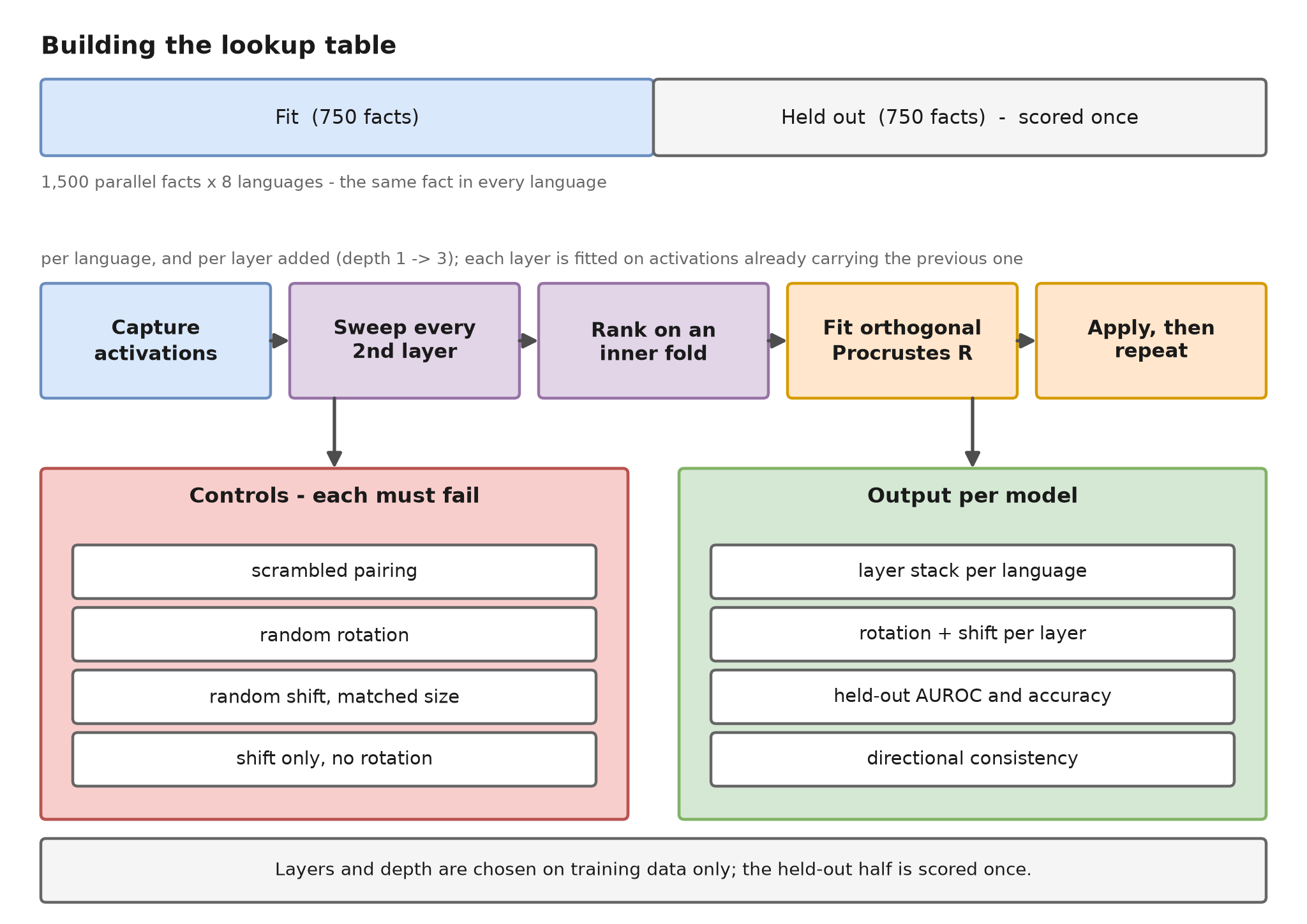}
\caption{\textbf{Building the lookup table.}
From 1{,}500 parallel facts ($\times\,8$ languages), we reserve half for fitting and half for held-out evaluation (scored once).
On the fit split, for each language we: (1) capture residual-stream activations at every second or third layer between $15\%$ and $97\%$ of depth, reducing the candidate space while retaining coverage across network depth, (2) rank candidate layers by AUROC on an inner fold of the fit split, and (3) greedily select up to $D{=}3$ layers, fitting the orthogonal Procrustes rotation $\R$ at each step.
This shallow forward search avoids the combinatorial cost of exhaustively evaluating layer combinations while allowing multiple interventions when they provide complementary gains.
Eleven controls (Section~\ref{sec:controls}), among them scrambled pairing, random rotation, random shift and random shift + rotation, must each score below the full method, confirming the result is not an artefact.
The held-out half is scored exactly once to report final accuracy and agreement.}
\label{fig:lookup_build}
\end{figure}

\section{Inference Algorithm}
\label{app:algorithm}

Algorithm~\ref{alg:rosh} gives the inference procedure of Section~\ref{sec:layer_selection}
in full. Only the answer position is transformed, and only at the layers listed in the
recipe for the claim's language.

\begin{algorithm}[!htbp]
\caption{RoSh inference}
\label{alg:rosh}
\begin{algorithmic}[1]
  \Require Claim $x$ in language $\ell$; recipe $\{(l_k, \R_k, \bmu_k^\ell, \bmu_k^{\mathrm{en}})\}_{k=1}^{K}$
  \Ensure  Corrected verdict (TRUE / FALSE)
  \For{layer $l = 0$ to $L-1$}
    \State $\h_l \gets \textsc{TransformerBlock}_l(\h_{l-1})$
    \If{$l \in \{l_1, \ldots, l_K\}$}
      \State $\h_l[\text{ans}] \gets
             (\h_l[\text{ans}] - \bmu_k^\ell)\,\R_k + \bmu_k^{\mathrm{en}}$
             \Comment{Eq.~\ref{eq:rosh}}
    \EndIf
  \EndFor
  \State \Return $\argmax_{v \in \{\text{TRUE},\text{FALSE}\}} p(v \mid \h_L)$
\end{algorithmic}
\end{algorithm}

\section{The Control Arms}
\label{app:controls}

Each control uses the same data and the same fitting procedure as the method but breaks
exactly one assumption. Table~\ref{tab:arms} gives the results.

\paragraph{Component ablations.}
Three arms isolate the two components of the transform:
\begin{itemize}
  \item \textbf{Shift only} (Eq.~\ref{eq:components}): translates the cloud without rotating it. Tests whether mean re-centring alone is sufficient.
  \item \textbf{Rotation only} (Eq.~\ref{eq:components}): rotates the cloud about its own centroid without shifting. Tests whether the rotation alone recovers the signal.
  \item \textbf{Unconstrained linear map}: replaces the orthogonal $\mathbf{R}$ with the ridge-regularised least-squares map $\mathbf{W} = (\mathbf{A}^\top\mathbf{A} + \lambda\mathbf{I})^{-1}\mathbf{A}^\top\mathbf{B}$, with $\lambda$ set to $10^{-2}$ times the mean eigenvalue of $\mathbf{A}^\top\mathbf{A}$. Because $n < d$, $\mathbf{W}$ sends every direction outside the span of the $n$ fitting activations to zero (Appendix~\ref{app:ridge}), so it discards whatever part of a held-out activation lies there, while the orthogonal $\mathbf{R}$ is invertible and discards nothing. Tests whether the orthogonality constraint is what lets the map generalise.
\end{itemize}

\paragraph{Correspondence controls.}
Two arms test whether the gain comes from the claim-level pairing:
\begin{itemize}
  \item \textbf{Scrambled pairing}: each non-English claim is paired with a \emph{random} English claim before fitting $\mathbf{R}$. The two clouds are the same; only the correspondence is destroyed.
  \item \textbf{English, rows unpaired}: scrambles pairings \emph{within English itself}, with no translation involved. This rules out any explanation that does not require a cross-lingual signal.
\end{itemize}

\paragraph{Randomisation controls.}
Three arms replace the fitted quantities with matched-size random ones:
\begin{itemize}
  \item \textbf{Random rotation}: a Haar-random orthogonal matrix of the same dimensionality.
  \item \textbf{Random shift}: a displacement of the same Euclidean norm as the real shift, pointing in a uniformly random direction.
  \item \textbf{Random shift + rotation}: both randomised simultaneously, the strongest ``any perturbation helps'' null.
\end{itemize}

\paragraph{Structural controls.}
Three arms verify that the transform is not exploiting a degree of freedom unrelated to language:
\begin{itemize}
  \item \textbf{Right map, wrong layer}: applies the correctly fitted $\mathbf{R}$ at an arbitrary layer instead of the selected one. Tests whether the layer choice matters.
  \item \textbf{Inverse map, on English}: applies the \emph{inverse} transform $\mathbf{R}^\top$ to English activations, mapping them toward the non-English frame. English AUROC should \emph{degrade} if the map is directional.
  \item \textbf{English to itself}: pairs each English claim with itself and fits $\mathbf{R}$. Every optimal solution is then the identity on the directions the data determine (Proposition~\ref{prop:english}), so the score must not change, a sanity null.
\end{itemize}

\noindent
A control passes when it scores below the full method; the three arms that act on English are
compared with the English baseline instead. All eleven pass when averaged over the eight models
(Table~\ref{tab:arms}).

\section{Tokeniser Verification}
\label{app:tokenizer}

DeepSeek-67B initially resolved to a broken slow tokeniser that discarded non-Latin script
entirely: Chinese claims tokenised to zero tokens, so every prompt in Arabic, Chinese and
Russian was identical and empty. Loading the fast tokeniser from \texttt{tokenizer.json}
fixes it, and the Chinese baseline rises from $0.501$ to $0.855$, Russian from $0.517$ to
$0.876$. All numbers in the paper use the corrected tokeniser, and every model was checked
for the same fault by round-tripping each prompt through decode. Its baseline accuracy and
directional consistency are recomputed from the arm run rather than the earlier threshold
file, which was written before the fix and understated both ($0.616$ and $0.657$ against the
corrected $0.748$ and $0.882$). All eight models are reported at depth 3.

\FloatBarrier
\section{Selected Layers}
\label{app:layers}

Table~\ref{tab:layers} lists the layer stack chosen for each language, in selection order.

\begin{table}[ht]
\centering
\caption{The layer stack chosen for each language, and the model's layer count.}
\label{tab:layers}
\small
\setlength{\tabcolsep}{5pt}
\resizebox{\textwidth}{!}{%
\begin{tabular}{l c ccccccc}
\toprule
Model & depth & PT & PL & DE & AR & ZH & RU & TR \\
\midrule
Llama-3B         & 28 & 14, 20, 24 & 14, 22, 26 & 20, 6, 22  & 16, 20, 14 & 20, 22, 12 & 14, 20, 12 & 14, 12, 6  \\
Mistral-7B       & 32 & 18, 16, 12 & 18, 30, 28 & 18, 30, 28 & 10, 8, 6   & 16, 12, 14 & 18, 28, 30 & 18, 10, 26 \\
Gemma-9B         & 42 & 28, 22, 26 & 26, 24, 22 & 24, 22, 28 & 22, 26, 24 & 38, 22, 36 & 24, 22, 26 & 28, 26, 24 \\
Mistral-24B      & 40 & 24, 26, 28 & 36, 18, 14 & 20, 24, 34 & 24, 20, 6  & 18, 6, 8   & 26, 24, 14 & 26, 14, 12 \\
Gemma-27B        & 46 & 42, 24, 30 & 22, 14, 32 & 22, 30, 18 & 24, 20, 34 & 38, 14, 36 & 24, 22, 30 & 16, 8, 32  \\
Qwen-32B         & 64 & 48, 54, 12 & 51, 54, 57 & 54, 48, 45 & 51, 48, 15 & 42, 36, 57 & 51, 54, 45 & 48, 57, 54 \\
Llama-70B        & 80 & 75, 45, 12 & 63, 39, 75 & 42, 36, 33 & 75, 12, 72 & 42, 57, 33 & 42, 75, 39 & 15, 39, 69 \\
DeepSeek-67B     & 95 & 29, 89, 92 & 44, 86, 77 & 44, 92, 35 & 14, 38, 47 & 89, 92, 14 & 44, 41, 89 & 89, 74, 47 \\
\bottomrule
\end{tabular}}

\end{table}

\FloatBarrier
\section{Control Results}
\label{app:controlresults}

\paragraph{Does the correspondence matter?}
The English rows are shuffled before fitting, so each claim is paired with the wrong English
counterpart. Same distributions, same geometry, same decomposition: only the item-level
pairing is destroyed. If this works as well, the cross-lingual alignment account is empty.

\begin{table}[ht]
\centering
\caption{Scrambled-correspondence control at depth 1, where the control mirrors the fitting
procedure exactly. Beats is the share of the three scrambled permutations that the fitted map
outperforms; $z$ is computed over those three.}
\label{tab:scrambled}
\small
\begin{tabular}{l cccc}
\toprule
Model & fitted & scrambled pairing & beats & $z$ \\
\midrule
Llama-3B         & 0.821 & 0.583 & 100\% & $+1.83$ \\
Mistral-7B       & 0.779 & 0.520 & 100\% & $+8.13$ \\
Gemma-9B         & 0.857 & 0.730 & 100\% & $+2.05$ \\
Mistral-24B      & 0.877 & 0.640 & 100\% & $+3.21$ \\
Gemma-27B        & 0.839 & 0.723 & 100\% & $+1.87$ \\
Qwen-32B         & 0.886 & 0.644 & 100\% & $+3.88$ \\
Llama-70B        & 0.879 & 0.523 & 100\% & $+3.81$ \\
DeepSeek-67B     & 0.835 & 0.719 & 100\% & $+2.31$ \\
\bottomrule
\end{tabular}
\end{table}

Every model clears it, with $z$ between $+1.83$ and $+8.13$. An earlier method, a single
language-direction steering vector, failed this same control on four of six models and was
abandoned.

\paragraph{Is it the direction, or merely a perturbation?}
Every arm is reduced to a single held-out AUROC, averaged over all eight models and all seven
non-English languages, and compared with the one number the method reaches: $0.853$, against
a do-nothing baseline of $0.820$. An arm succeeds when it lands below that. The last three
arms act on English and are compared instead with the English baseline of $0.870$, since
English is what they are scored on.

\begin{table}[ht]
\centering
\caption{Every arm, averaged over 8 models and seven languages, held out.}
\label{tab:arms}
\footnotesize
\resizebox{\textwidth}{!}{%
\begin{tabular}{l >{\raggedright\arraybackslash}p{0.40\textwidth} ccc}
\toprule
Arm & What it tests & AUROC & vs method & success \\
\midrule
baseline & nothing is applied & 0.820 & $-0.033$ & yes \\
shift only & whether moving the cloud is enough, without turning it & 0.834 & $-0.019$ & yes \\
rotation only & whether turning it is enough, without moving it & 0.844 & $-0.008$ & yes \\
unconstrained linear map & whether the orthogonality constraint earns its place & 0.786 & $-0.067$ & yes \\
scrambled pairing & whether the fact-to-fact correspondence matters, or only the two clouds & 0.621 & $-0.232$ & yes \\
random rotation & whether any rotation would do as well as the fitted one & 0.658 & $-0.194$ & yes \\
random shift & whether it is the size of the displacement or its direction & 0.819 & $-0.034$ & yes \\
random shift + rotation & whether generic perturbation of the residual stream explains the gain & 0.653 & $-0.199$ & yes \\
right map, wrong layer & whether the layer choice matters or any layer would serve & 0.793 & $-0.059$ & yes \\
inverse map, on English & whether the map is directional: English should degrade & 0.842 & $-0.028$ & yes \\
English to itself & the null: a fact paired with itself must change nothing & 0.870 & $0.000$ & yes \\
English, rows unpaired & scrambling inside a single language, with no translation involved & 0.660 & $-0.210$ & yes \\
\bottomrule
\end{tabular}}

\end{table}
\FloatBarrier

\end{document}